\definecolor{lightyellow}{rgb}{1,0.95,0.8}
\definecolor{lightgreen}{rgb}{0.83,0.91,0.83}
\theoremstyle{plain}
\newtheorem{theorem}{Theorem}[section]
\newtheorem{lemma}[theorem]{Lemma}
\newtheorem{corollary}[theorem]{Corollary}
\theoremstyle{definition}
\theoremstyle{remark}
\newtheorem{observation}[theorem]{Observation}
\def\mW{\boldsymbol{W}}
\icmltitlerunning{\textsc{Graphtester}: Exploring Theoretical Boundaries of GNNs on Graph Datasets}
\begin{document}

\twocolumn[
\icmltitle{\textsc{Graphtester}: Exploring Theoretical Boundaries of\\GNNs on Graph Datasets}

\begin{icmlauthorlist}
\icmlauthor{M. Eren Akbiyik*}{eth}
\icmlauthor{Florian Gr\"{o}tschla*}{eth}
\icmlauthor{B\'{e}ni Egressy*}{eth}
\icmlauthor{Roger Wattenhofer}{eth}
\end{icmlauthorlist}

\icmlaffiliation{eth}{ETH Zürich, Switzerland}

\icmlcorrespondingauthor{M. Eren Akbiyik}{eakbiyik@ethz.ch}

% You may provide any keywords that you
% find helpful for describing your paper; these are used to populate
% the "keywords" metadata in the PDF but will not be shown in the document
\icmlkeywords{Machine Learning, ICML}

\vskip 0.3in
]

% this must go after the closing bracket ] following \twocolumn[ ...

% This command actually creates the footnote in the first column
% listing the affiliations and the copyright notice.
% The command takes one argument, which is text to display at the start of the footnote.
% The \icmlEqualContribution command is standard text for equal contribution.
% Remove it (just {}) if you do not need this facility.

% \printAffiliationsAndNotice{}  % leave blank if no need to mention equal contribution
\printAffiliationsAndNotice{\icmlEqualContribution} % otherwise use the standard text.

\begin{abstract}
Graph Neural Networks (GNNs) have emerged as a powerful tool for learning from graph-structured data. 
However, even state-of-the-art architectures have limitations on what structures they can distinguish, imposing theoretical limits on what the networks can achieve on different datasets.
In this paper, we provide a new tool called \textsc{graphtester} for comprehensive analysis of the theoretical capabilities of GNNs for various datasets, tasks, and scores. 
We use \textsc{graphtester} to analyze over 40 different graph datasets, determining upper bounds on the performance of various GNNs based on the number of layers. 
Further, we show that the tool can also be used for Graph Transformers using positional node encodings, thereby expanding its scope.
Finally, we demonstrate that features generated by \textsc{Graphtester} can be used for practical applications such as Graph Transformers, and provide a synthetic dataset to benchmark node and edge features, such as positional encodings.
The package is freely available at the following URL: \href{https://github.com/meakbiyik/graphtester}{https://github.com/meakbiyik/graphtester}.
\end{abstract}

\section{Introduction}

Graph-structured data is ubiquitous in various domains, including social networks \citep{newman2003structure}, chemistry \citep{dobson2003distinguishing}, and transport \citep{barthelemy2011spatial}. Analyzing and learning from such data is critical for understanding complex systems and making informed decisions. Graph Neural Networks (GNNs) \citep{scarselli2009graph,kipf2017semi} have emerged as popular tools for learning from graph data due to their ability to capture local and global patterns \citep{bronstein2017geometric, wu2020comprehensive}.

The main approach for analyzing the theoretical power of GNN architectures relies on showing equivalence to the Weisfeiler-Lehman (WL) graph isomorphism test \citep{xu2019powerful}. Standard message-passing GNNs are bounded in power by the 1-WL test, and this can be used as a basis for calculating upper bounds on the performance of said GNNs on graph classification datasets \citep{zopf20221wl}. 
We can extend this concept to different tasks, where these upper bounds can tell us what performance is achievable on a given graph dataset with a GNN.
%We can do so by mapping nodes or graphs to the correct prediction that they have to make. 
For example, in the task of categorical predictions for nodes, as long as we can predict the correct category for every node, we will get perfect accuracy. 
This only works if we can differentiate nodes that have to make different predictions. Once two nodes ``see'' precisely the same surrounding, they will always come to the same conclusion although they might need to make different predictions (refer Figure \ref{fig:example_graphs} for an example).
By using the equivalence to the 1-WL algorithm, we can identify structurally identical nodes for a GNN and optimize the overall accuracy by assigning the majority label amongst similar nodes.
We will thus get a theoretical upper bound that gives us insight into the solvability of a graph dataset.

We present a tool called \textsc{Graphtester} that computes these metrics in an automated way and answers the question: What is the best performance one can achieve on a given dataset with a GNN? How does this upper bound improve once we add node or edge features? We further show that \textsc{Graphtester} is not only applicable to GNNs following the message-passing formula but also graph transformers (GTs), a more recent development integrating a generalized attention mechanism, if restricted to positional encodings for nodes.
%However, this approach currently has limitations; for example, it fails to take edge features into account and has not been shown to apply to graph transformers. For graph transformers, additional complexity arises from the use of positional node encodings, which can increase expressive power.

\begin{figure}[!tb]
\begin{minipage}[b]{0.48\linewidth}
    \centering
    \begin{tikzpicture}[scale=1.25, every node/.style={draw, circle}, node distance={5mm}] 
      \draw
        (0.389, -0.385) node (0)[fill=lightgreen]{}
        (0.389, 0.385) node (1)[fill=lightgreen]{}
        (-1.0, 0.0) node (2)[fill=lightyellow]{}
        (-0.389, -0.385) node (3)[fill=lightgreen]{}
        (-0.389, 0.385) node (4)[fill=lightgreen]{}
        (1.0, 0.0) node (5)[fill=lightyellow]{};
      \begin{scope}[-]
        \draw (0) to (5);
        \draw (0) to (4);
        \draw (0) to (3);
        \draw (1) to (5);
        \draw (1) to (4);
        \draw (1) to (3);
        \draw (2) to (4);
        \draw (2) to (3);
      \end{scope}
    \end{tikzpicture}
\end{minipage}
\hfill
\begin{minipage}[b]{0.48\linewidth}
  \begin{tikzpicture}[scale=1.25, every node/.style={draw, circle}, node distance={5mm}, main/.style = {draw, circle}]       
      \draw
        (0.389, -0.385) node (0)[fill=lightgreen]{}
        (0.389, 0.385) node (1)[fill=lightgreen]{}
        (-0.389, 0.385) node (2)[fill=lightgreen]{}
        (-0.389, -0.385) node (3)[fill=lightgreen]{}
        (-1.0, 0.0) node (4)[fill=lightyellow]{}
        (1.0, 0.0) node (5)[fill=lightyellow]{};
      \begin{scope}[-]
        \draw (0) to (5);
        \draw (0) to (3);
        \draw (0) to (1);
        \draw (1) to (5);
        \draw (1) to (2);
        \draw (2) to (4);
        \draw (2) to (3);
        \draw (3) to (4);
      \end{scope}
    \end{tikzpicture}
\end{minipage}
\caption{Two non-isomorphic graphs that cannot be distinguished by 1-WL. The colors stand for the stabilized 1-WL labels.}
\label{fig:example_graphs}
\vskip -0.2in
\end{figure}
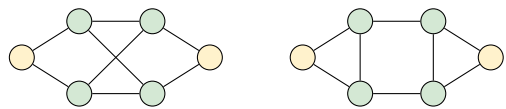

Our contributions can be summarized as follows:

\begin{itemize}
\item We present and use \textsc{Graphtester} to analyze over 40 graph datasets to determine upper bound scores for various GNNs. The tool provides support for edge features, different performance measures, different numbers of GNN layers, and higher-order WL tests. It also comes with a synthetic dataset to benchmark features for nodes and edges, such as positional encodings.

\item We prove that graph transformers making use of positional encodings for nodes are also bounded in power by the Weisfeiler-Lehman (1-WL) graph isomorphism test, thereby resulting in the same upper bounds as a GNN with additional encoding information and making our tool applicable as well. In addition, we extend the existing proofs for GNNs to cover edge features.
%Additionally, we prove that even when using edge features, they are still bounded by the power of the 2-WL graph isomorphism test.

% \item We prove that graph transformers without positional encodings are equivalent in power to the 2-Weisfeiler-Lehman (2-WL) \citep{weisfeiler1968reduction} graph isomorphism test, even in the presence of edge features. This finding elucidates the representational power of graph transformers and offers a solid foundation for the subsequent analysis of graph datasets.

% \item We assess different pre-coloring methods from the literature for their potential use as positional encodings and propose a Python package called \textsc{graphtester} for evaluating positional encodings for graph datasets.

%\item Finally, we validate our findings by fitting GraphGPS \citep{rampavsek2022recipe_graphGPS} on real-world datasets using positional encodings, and we investigate the connection between theoretical upper-bound improvement and real-world performance.
\end{itemize}

The rest of this paper is organized as follows: Section \ref{sec:related_work} discusses related works; Section \ref{sec:analysis} presents our theoretical analysis; Section \ref{sec:graphtester} introduces \textsc{Graphtester} package; and Section \ref{sec:conclusion} concludes the paper.

% Can be used in any GNN pipeline?

% Separate contribution for edge features?

\section{Related Work}
\label{sec:related_work}

\subsection{Graph Neural Networks}

Graph Neural Networks (GNNs) have been widely studied as an effective way to model graph-structured data \citep{scarselli2009graph,kipf2017semi,bronstein2017geometric,wu2020comprehensive}. GNNs learn by propagating information through the graph and capturing local and global patterns. They have been applied to various tasks, such as node classification \citep{kipf2017semi}, link prediction \citep{schlichtkrull2018modeling}, and graph classification \citep{duvenaud2015convolutional}. There are now many variants, including Graph Convolutional Networks (GCNs) \citep{kipf2017semi}, GraphSAGE \citep{hamilton2017inductive}, and Graph Attention Networks (GATs) \citep{velivckovic2018graph}.

\subsection{Graph Transformers}

Graph transformers are a more recent development in the GNN literature, inspired by the success of the Transformer architecture in natural language processing \citep{vaswani2017attention}. They employ self-attention mechanisms to model interactions between all pairs of nodes in graphs \citep{dwivedi2020generalization, kreuzer2021rethinking, dwivedi2021graph, wu2022graphtrans}. This allows them to capture long-range dependencies with relatively few layers. However, this also comes with a high computational cost, limiting their applicability to very large graphs. Some approaches have been proposed to overcome such limitations and produce scalable architectures \citep{rampavsek2022recipe_graphGPS, wu2022nodeformer}. Graph transformers have shown promising results on various graph-based tasks, and their flexibility has led to a growing interest in understanding their capabilities and limitations. An important component when analyzing their capabilities is the choice of positional node encodings. 
% An important variant of these models is GraphGPS \citep{yuan2021graphgps}.

\subsection{Theoretical Analysis of GNNs}

The Weisfeiler-Lehman (WL) test is a well-known graph isomorphism test that has been linked to the expressive power of message-passing-based GNNs \citep{morris2019weisfeiler, xu2018powerful}. 
There is a hierarchy of WL tests, and the 1-dimensional WL (1-WL) test has been shown to upper-bound the expressive power of many GNN architectures \citep{xu2018powerful}.
The test iteratively refines node labels based on the labels of neighboring nodes, providing a way to compare the structure of different nodes, graphs, or subgraphs. 
The theoretical connection to the WL test provides valuable insights into the representational power of GNNs, but it leaves the question of incorporating edge features largely unaddressed. In this paper, we extend this line of work by proving that in the absence of positional encodings, graph transformers are equivalent to the 2-WL algorithm, even when using edge features.
Given the importance of specific subgraph patterns for certain applications, some works assess the theoretical power of GNNs on more tangible scales \citep{chen2020canSubgraphCounting, papp2022theoretical}. 

\subsection{Positional Encodings for GNNs}

GNNs, including graph transformers, face challenges distinguishing nodes and graphs. 
% and incorporating edge features.
For example, a basic message-passing GNN is not able to distinguish two three-cycles from a six-cycle.
One common approach to addressing this issue, especially with graph transformers, is to use positional encodings or pre-coloring methods to provide additional global information about node positions \citep{morris2019weisfeiler, maron2020provably, dwivedi2020generalization}. Several positional encoding methods have been proposed, such as eigenvector-based encodings \citep{dwivedi2020benchmarking}, Poincaré embeddings \citep{skopek2020message}, and sinusoidal encodings \citep{li2018adaptive}. These methods aim to improve the expressivity of GNNs by enriching the node features with positional information that can help GNNs better capture the graph structure. In this paper, we focus on deterministic positional encoding methods, 
% which are also called centrality metrics, 
as they allow us to quantify fixed and provable improvements.

\subsection{Analysis of Graph Datasets}

Closest to our work is \citet{zopf20221wl}, which analyses several standard GNN benchmark datasets for graph classification with respect to the 1-WL test. They compute upper bounds on the accuracy achievable by 1-WL GNNs and confirm that expressiveness is often not the limiting factor in achieving higher accuracy. Our work can be seen as an extension of this paper, going beyond simple graph classification to all graph-based tasks. \textsc{Graphtester} provides support for edge features, positional encodings, node/link targets, various performance measures, and higher-order WL tests. In addition, we lay the theoretical basis for extending the analysis to graph transformers, thereby further expanding the scope of \textsc{graphtester}.

\section{Theoretical Analysis}
\label{sec:analysis}

In this section, we provide the necessary theoretical analysis to use both edge features and graph transformers restricted to positional encodings for nodes in our framework.
%In this section, we provide a theoretical analysis of the capabilities of graph transformers. We
%first prove that graph transformers without positional encodings are equivalent in power to the 2-Weisfeiler-Lehman graph isomorphism test, even in the presence of edge features. We then analyze the impact of various positional encoding methods on the expressive power of graph transformers.

\subsection{Preliminaries}

Let $G=(V, E, \mathbf{X}^V, \mathbf{X}^E)$ be an undirected graph, where $V$ and $E$ denote the node and edge sets, and $\mathbf{X}^V, \mathbf{X}^E$ denote the node and edge feature matrices. The feature matrices have shapes $(|V|\times d_V)$ and $(|E|\times d_E)$ respectively, with $d_V$ and $d_E$ representing the number of different labels/colors each node and edge have. $\mathcal{N}(v)$ represents the set of neighbours of node $v \in V$.

\paragraph{1-Weisfeiler-Lehman} The 1-WL algorithm, also known as naïve vertex classification or as color refinement, is one of the early attempts at the graph isomorphism (GI) problem. Variants of this algorithm are still employed in practical implementations of GI testers~\citep{Kiefer2020WL} such as \textit{nauty},  \textit{Traces}~\citep{mckay2013nauty}, and \textit{Bliss}~\cite{junttila2007bliss}. The iterative algorithm outputs a stable coloring of the nodes in a graph through a combination of neighborhood aggregation and hashing and is described in Algorithm \ref{alg:1wl}. 
The function $hash$ is an idealized perfect hash function that we assume to be injective.
%The loop runs for at most $(|V| - 1)$ iterations, which is proven to be the maximum number required for any graph until the resulting hash $c_v^{(i)}$ stabilizes ($c_v^{(i-1)} = c_v^{(i)}$) \cite{Kiefer2020WL}. Note that the basic algorithm does not admit edge features.
We know that the induced partitioning of the nodes by color stabilizes after at most $|V| - 1$ iterations~\cite{Kiefer2020WL}, resulting in the maximum number of rounds we execute. Throughout the definitions, we use curly braces to refer to multisets.

\begin{algorithm}
\caption{1-Weisfeiler-Lehman (1-WL)}\label{alg:1wl}
\begin{algorithmic}
\REQUIRE $G=(V,E,\mathbf{X}^V)$
\STATE $c_v^{(0)} \gets hash(\mathbf{X}^V_{v}) \;\forall v \in V$ 
\FOR {$i \gets 1 \textbf{ to } (|V| - 1)$}
    \STATE $c_v^{(i)} \gets hash(c_v^{(i-1)}, \{c_w^{(i-1)} : w \in \mathcal{N}(v)\}) \quad \forall v \in V$ 
    %\IF{$c_v^{(i-1)} = c_v^{(i)}$}
    %    \STATE \textbf{break}
    %\ENDIF
\ENDFOR
\STATE \textbf{return} $c_v^i$
\end{algorithmic}
\end{algorithm}

\paragraph{$k$-Weisfeiler-Lehman} 
The $k$-WL algorithm is a $k$-dimensional extension of Color Refinement where the algorithm hashes over subgraphs of node k-tuples instead of single nodes. The algorithm can be seen in Algorithm \ref{alg:kwl}. In addition to the standard notation, we use $G[U]$ to represent the subgraph of $G$ induced by selecting the set of nodes $U \subseteq V$. Induced subgraphs include all edges between the selected nodes in the original graph, as well as node and edge attributes associated with them. Furthermore, we use $\mathcal{N}_i(\mathbf{v})$ to represent the neighborhood of k-tuples of node $\mathbf{v}$ at the index $i$. That is, for a k-tuple $\mathbf{v}=(v_1,v_2,...,v_k)$ and $v_i \in V$, the neighborhood at index $i$ can be written as a multiset of k-tuples
\begin{align*}
    \mathcal{N}_i(\mathbf{v}) := \left\{(v_1,...,v_{i-1},w,v_{i+1},...,v_k) : w \in V \right\}.
\end{align*}

\begin{algorithm}
\caption{$k$-Weisfeiler-Lehman ($k$-WL)}\label{alg:kwl}
\begin{algorithmic}
\REQUIRE $G=(V,E,\mathbf{X}^V,\mathbf{X}^E)$
\STATE $c_\mathbf{v}^{(0)} \gets hash(G[\mathbf{v}]) \;\forall \mathbf{v} \in V^k$ 
\FOR {$i \gets 1 \textbf{ to } (|V|^k - 1)$}
    \STATE $c_{\mathbf{v},j}^{(i)} \gets \{c_{\mathbf{w}}^{(i-1)} : \mathbf{w} \in \mathcal{N}_j(\mathbf{v})\} \;\forall j \in [1..k]$
    \STATE $c_\mathbf{v}^{(i)} \gets hash(\{c_\mathbf{v}^{(i-1)}, c_{\mathbf{v},1}^{(i)},...,c_{\mathbf{v},k}^{(i)}\})$ 
    %\IF{$c_\mathbf{v}^{(i-1)} = c_\mathbf{v}^{(i)}$}
    %    \STATE \textbf{break}
    %\ENDIF
\ENDFOR
\STATE \textbf{return} $c_\mathbf{v}^{(i)}$
\end{algorithmic}
\end{algorithm}

\paragraph{Equivalence of 1-WL to $k$-WL for $k=2$} It has been shown by Immerman and Lander \citep{immerman1990cr} that for graphs $G$ and $H$, 1-WL does not distinguish $G$ and $H$ if and only if $G$ and $H$ are $\mathsf{C}^2$-equivalent \citep{immerman1990cr}. Furthermore, as shown by Cai, F\"urer and Immerman~\citep{cfi1989cr}, $k$-WL does not distinguish $G$ and $H$ if and only if $G$ and $H$ are $\mathsf{C}^k$-equivalent. Consequently, 1-WL can distinguish $G$ and $H$ if and only if 2-WL can also distinguish $G$ and $H$. This insight is crucial when extending 1-WL to use edge features, as the preservation of this hierarchy is important to connect Graph Neural Networks and graph transformers to the $k$-WL literature in analyzing their expressivity.

\paragraph{Graph Neural Networks} 
% In 2019, Xu et al. showed that Message Passing Neural Networks that follow the classical aggregate-combine strategy are at most as powerful as 1-WL in distinguishing graph structures, and described a GNN architecture, Graph Isomorphism Networks (GINs) that is theoretically as powerful as 1-WL \cite{xu2018powerful}. 
GNNs comprise multiple layers that repeatedly apply neighborhood aggregation and combine functions to learn a representation vector for each node in the graph. Rigorously, for an input graph $G=(V,E,{\textbf{X}}^V)$, the \textit{i}-th layer of a GNN can be written as
\begin{align*}
    c_v^{(i)} &= \text{COMBINE}^{(i)}\left(c_v^{(i-1)},\right.\\ &\quad\quad\left.\text{AGGREGATE}^{(i)}\left(\left\{c_w^{(i-1)} : w \in \mathcal{N}(v)\right\}\right)\right),
\end{align*}
where $c_v^{(i-1)}$ represents the state of node $v$ after layer $(i-1)$.

\paragraph{Graph Transformers} Transformer models have been widely used in modeling sequence-to-sequence data in different domains \citep{vaswani2017attention}. Although the attention mechanism has commonly been used to learn on graph-structured data \citep{velivckovic2018graph}, the use of transformers is relatively recent. 
A graph transformer layer relies on global self-attention and is parameterized by query, key, and value matrices $\mW^Q, \mW^K, \mW^V \in \mathbb{R}^{d_{\text{in}} \times d_{\text{out}}}$, where $d_{\text{in}}$ is the embedding dimension of nodes before the application of the transformer layer and $d_{\text{out}}$ is the output dimension. For the sake of simplicity, we restrict ourselves to single-headed attention. We assume that node embeddings $c^{(i-1)}_v$ are stacked in a matrix $\mathbf{C}^{(i-1)} \in \mathbb{R}^{n \times d_{\text{in}}}$.
$\mathbf{C}$ is then projected with the query and key matrices before a softmax function is applied row-wise and the value matrix is multiplied:
\begin{align*}
    \textsc{Attn}(\mathbf{C}^{(i)}) = \qquad \qquad \qquad \qquad \qquad \qquad \qquad \qquad \\ \text{softmax}\left(\frac{\mathbf{C}^{(i-1)}\mW^Q (\mathbf{C}^{(i-1)} \mW^K)^T}{\sqrt{d_{\text{out}}}}\right) \ \mathbf{C}^{(i-1)} \mW^{V}
\end{align*}
States $\mathbf{C}^{(i-1)}$ can be passed through a learnable function before and after the global attention \textsc{Attn} is applied.
Positional encodings are commonly used with graph transformers to give every node a sense of where it is located in the graph. Positional encodings can come in the form of node encodings \citep{rampavsek2022recipe_graphGPS} that are essentially features added to the nodes before the attention block is applied or node-pair encodings, where each node-pair is endowed with features such as shortest-path distances~\citep{ying2021transformers}. 
% and add this as a bias to the attention computation
Node-pair encodings have the downside that the full attention matrix has to be materialized. In this case, one cannot profit from faster attention mechanisms~\citep{rampavsek2022recipe_graphGPS} that scale better than $\mathcal{O}(n^2)$, making it practically infeasible.
Here, we restrict ourselves to node encodings.
% , similar to \citet{rampavsek2022recipe_graphGPS}.

\begin{theoremE}
    The 1-WL test is at least as powerful as GTs with positional encodings for nodes, e.g., GraphTrans~\citep{wu2022graphtrans} or GraphGPS~\citep{rampavsek2022recipe_graphGPS}, if node encodings are also provided as initial color classes for the 1-WL algorithm.
\end{theoremE}
\textit{Proof sketch.}
To prove that 1-WL is an upper bound in terms of expressiveness for GTs with node encodings, we first consider the color classes of a 1-WL execution on the fully-connected graph, instead of the original topology. As we input positional encodings as color classes to 1-WL, and we can reconstruct all attention scores from the 1-WL labels in every iteration, it becomes clear that 1-WL can simulate a GT. We then show that any two nodes with the same color class in a fully connected graph will stay in the same color class, meaning no more refinement of color classes is possible for a transformer layer.

\begin{proofE}
    First, we consider the fully-connected graph for 1-WL and show that this is at least as powerful as a GT with only node features. 
    As 1-WL hashes all neighbor states with an injective function, we can observe states from all nodes in the graph in the aggregated multiset at node $v$. We can then compute the outcome of the attention module at every node:
    \begin{align*}
        \textsc{Attn}(c^{(i)}_v) = \text{softmax}\left(\frac{c^{(i-1)^T}_v\mW^Q (\mathbf{C}^{(i-1)^T} \mW^K)^T}{\sqrt{d_{\text{out}}}}\right) \ \mathbf{C}^{(i-1)} \mW^{V},
    \end{align*}
    where $C^{(i-1)}$ is computed by stacking all aggregated states and the state $c^{(i)}_v$ as row-vectors in a matrix. The order does not matter here. This only works because we receive the state from every node, which is not the case if the graph is not fully connected.
    We further show that color classes for 1-WL in a fully-connected graph are refined no further, meaning that the execution of a global self-attention layer does not improve expressiveness.
    Consider two nodes $u$ and $v$ with the same color $c_u^{(i-1)} = c_v^{(i-1)}$in such a setting. Then, both nodes will receive exactly the same multiset of neighboring node states: It contains all $c_w^{(i-1)}$ for $w \neq u, v$, plus $c_u^{(i-1)}$ for node $v$ and $c_v^{(i-1)}$ for node $v$, which are the same. 

    The proof shows that one layer of global self-attention does not improve expressiveness, as color classes cannot be refined, which makes it applicable to GraphTrans where only one such layer is applied at the end, but also GraphGPS, where layers of message-passing are interleaved with global self-attention layers. 
    It should be noted that this does not work for architectures adding information to any node-pair for the attention computation, such as shortest path distances in Graphormer~\cite{ying2021transformers}.

\end{proofE}

\subsection{Edge-Feature-Aware 1-WL Algorithm}

We now present the edge-feature-aware 1-WL (1-WLE). At each iteration, the algorithm updates the node labels based on both the neighboring node labels and the edge labels of the connecting edges. Formally, the edge-feature-aware 1-WL is defined in Algorithm \ref{alg:1wle}.

\begin{algorithm}
\caption{Edge-Feature-Aware 1-WL (1-WLE)}\label{alg:1wle}
\begin{algorithmic}
\REQUIRE $G=(V,E,\mathbf{X}^V,\mathbf{X}^E)$
\STATE $c_v^{(0)} \gets hash(\mathbf{X}^V_{v}) \;\forall v \in V$ 
\FOR {$i \gets 1 \textbf{ to } (|V| - 1)$}
    \STATE $c_v^{(i)} \gets hash(c_v^{(i-1)}, \{(\mathbf{X}^E_{(v,w)}, c_w^{(i-1)}) : w \in \mathcal{N}(v)\})$ 
    %\IF{$c_v^{(i-1)} = c_v^{(i)}$}
    %    \STATE \textbf{break}
    %\ENDIF
\ENDFOR
\STATE \textbf{return} $c_v^i$
\end{algorithmic}
\end{algorithm}

\begin{theoremE}
    The Edge-Feature-Aware 1-WL test is equivalent in power to a GIN with edge features, as proposed by Hu et al.~\citep{hu2019gineconv}.
    %, and not less powerful than GTs with positional encoding for nodes, e.g., GraphTrans~\citep{wu2022graphtrans} or GraphGPS~\citep{rampavsek2022recipe_graphGPS} if node encodings are mapped to the initial color classes.
\end{theoremE}
\textit{Proof sketch.} 
To show the equivalence between 1-WLE and GIN with edge features, one can extend the original proof for equivalence of 1-WL and GIN. What changes is that the aggregation now gets node states with additional edge labels, but an injective aggregation will still maintain this information.

\begin{proofE}
    A more rigorous proof of equivalence can be crafted by following the work on Graph Isomorphism Networks~\citep{xu2019powerful}. We will simply sketch the connection here. GIN convolution layers with edge features (also known as GINEConv) are defined as follows:
    \begin{align*}
        c_v^{(i)} &= \text{COMBINE}^{(i)}\left(c_v^{(i-1)}, \text{AGGREGATE}^{(i)}\left(\left\{\left(\mathbf{X}^E_{(v,w)}, c_w^{(i-1)}\right) \mid w \in \mathcal{N}(v)\right\}\right)\right).
    \end{align*}
    %This definition also applies to graph transformers without positional encodings, which are sparingly used even in the latest works as the classical positional encodings in the Transformers literature are often not permutation-invariant \citep{wu2022graphtrans}.
    For an injective function $\text{AGGREGATE}^{(i)}$ that operates on multisets, and an injective function $\text{COMBINE}^{(i)}$, it is shown that without edge features GINs are as powerful as 1-WL~\citep{xu2019powerful}. 

    To extend this argument to GIN with edge features, we need $\text{AGGREGATE}^{(i)}$ to be an injective function operating on multisets of tuples. In GINEConv, concatenation is used to connect each neighboring node embedding $\mathbf{X}^E_{(v,w)}$ with its corresponding edge features $c_w^{(i-1)}$, whence the aggregation step continues as in GIN with an injective function acting on multisets. Since the node and edge features have fixed lengths, concatenation is injective. Then, since the composition of injective functions is also injective, this approach ensures that $\text{AGGREGATE}^{(i)}$ remains injective when edge features are used.

    The rest of the argument follows along the same lines as the proofs in \citet{xu2019powerful}. This shows that GINEConv can be as powerful as 1-WLE.

    On the other hand, to show that GINEConv (and extensions of other message-passing GNNs) can be at most as powerful as 1-WLE, one can follow the same inductive proof as in Lemma 2 of \citet{xu2019powerful}.
\end{proofE}

\subsection{Equivalence to 2-WL Test}

We now provide a proof that our edge-feature-aware 1-WL extension is equivalent to the 2-WL test. First, for some simple operator, we show that 1-WLE over a graph with edge features is equivalent to 1-WL with the same graph when the operator is applied. Then, we point to the equivalence of 2-WL with 1-WL under the operator, and finally, show the equivalence of 2-WL over a graph with edge features, and the same graph under the given operator.

\paragraph{Incidence graph operator} Consider the graphs in the form $G=(V, E, \mathbf{X}^V, \mathbf{X}^E)$. We denote operator $\mathcal{T}: G \rightarrow G$ as the incidence graph operator as follows: for the given input graph $G$ with edge labels, $\mathcal{T}$ creates a new node $w$ for each edge $(u,v) \in E$ with the edge feature assigned as the node label, connects two ends of the edge to the node with new edges $(u,w), (w,v)$, and finally removes the original edge $(u,v)$. The final graph is also referred to as the "incidence graph" of graph G. In the output graph, there are no edge labels. For an example application, see Figure \ref{fig:operator}.

\begin{figure}[!tbh]
\begin{minipage}[b]{0.48\linewidth}
    \centering
    \begin{tikzpicture}[every node/.style={scale=0.75}, node distance={20mm}, thick, main/.style = {draw, circle}] 
    \node[main] (1) {$x_1$}; 
    \node[main] (2) [above right of=1] {$x_2$}; 
    \node[main] (3) [below right of=1] {$x_3$}; 
    \node[main] (4) [above right of=3] {$x_4$}; 
    \node[main] (5) [above right of=4] {$x_5$}; 
    \node[main] (6) [below right of=4] {$x_6$}; 
    \draw (1) -- node[midway, fill=white] {a} (2); 
    \draw (1) -- node[midway, fill=white] {b} (3); 
    \draw (2) -- node[midway, fill=white] {b} (4); 
    \draw (3) -- node[midway, fill=white] {a} (4); 
    \draw (5) -- node[midway, fill=white] {c} (4); 
    \draw (6) -- node[midway, fill=white] {b} (4); 
    \end{tikzpicture}
\end{minipage}
\hfill
\begin{minipage}[b]{0.48\linewidth}
    \centering
    \begin{tikzpicture}[every node/.style={scale=0.75}, node distance={20mm}, thick, main/.style = {draw, circle}] 
    \node[main] (1) {$x_1$}; 
    \node[main] (2) [above right of=1] {$x_2$}; 
    \node[main] (3) [below right of=1] {$x_3$}; 
    \node[main] (4) [above right of=3] {$x_4$}; 
    \node[main] (5) [above right of=4] {$x_5$}; 
    \node[main] (6) [below right of=4] {$x_6$}; 
    % new nodes
    \node[main] (7) at ($(1)!0.5!(2)$) {$a$}; 
    \node[main] (8) at ($(1)!0.5!(3)$)  {$b$}; 
    \node[main] (9) at ($(2)!0.5!(4)$)  {$b$}; 
    \node[main] (10) at ($(3)!0.5!(4)$)  {$a$}; 
    \node[main] (11) at ($(5)!0.5!(4)$)  {$c$}; 
    \node[main] (12) at ($(6)!0.5!(4)$)  {$b$}; 
    \draw (1) -- (7); 
    \draw (7) -- (2); 
    \draw (1) -- (8); 
    \draw (8) -- (3); 
    \draw (2) -- (9); 
    \draw (9) -- (4); 
    \draw (3) -- (10); 
    \draw (10) -- (4); 
    \draw (5) -- (11); 
    \draw (11) -- (4); 
    \draw (6) -- (12); 
    \draw (12) -- (4); 
    \end{tikzpicture}
\end{minipage}
\caption{Incidence graph operator $\mathcal{T}$ applied on the input graph on the right, outputs the final graph on the left with no edge features, and each edge converted to a node with the original edge label.}
\label{fig:operator}
\end{figure}
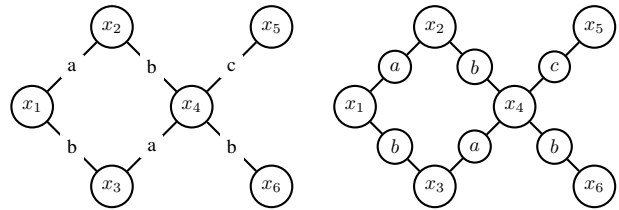

\begin{theoremE}[Equivalence of 1-WLE to 1-WL($\mathcal{T}$(G))] \label{thm:cre_incidence_cr_equivalence} 1-WLE on graph G is equivalent to 1-WL over graph $\mathcal{T}$(G) so that there is an injective map $f$ for which f(1-WLE(G)) = 1-WL($\mathcal{T}$(G)) up to isomorphism.
\end{theoremE}
\begin{proofE}
We denote the initial node labels for the original nodes $v$ in both the 1-WL and the 1-WLE algorithms as
\begin{equation*}
    c_v^{(0)} \gets hash(\mathbf{X}^V_{v}) \;\forall v \in V.
\end{equation*}
We denote the initial node labels for the newly added nodes for the 1-WL algorithm as
\begin{equation*}
    c_{(v,w)}^{(0)} \gets hash(\mathbf{X}^E_{(v,w)}) \;\forall (v,w) \in E
\end{equation*}

For the first iteration of 1-WLE over graph G, we have the node features
\begin{align*}
    c_v^{(1)} \gets hash(c_v^{(0)}, \{(\mathbf{X}^E_{(v,w)}, c_w^{(0)}) \mid w \in \mathcal{N}(v)\})
\end{align*}

For 1-WL over graph $\mathcal{T}(G)$, we distinguish the sets of original nodes $v$ and newly added nodes. For the first iteration, these sets can be written as
\begin{align*}
    c'{}_v^{(1)} &\gets hash(c_v^{(0)}, \{c_{(v,w)}^{(0)} \mid w \in \mathcal{N}(v)\})\\
    c'{}_{(v,w)}^{(1)} &\gets hash(c_{(v,w)}^{(0)}, \{c_v^{(0)}, c_w^{(0)}\})
\end{align*}

While writing the second step of 1-WL over $\mathcal{T}(G)$, WLOG we can replace the hash function with an identity function for the proof, as it is assumed to be injective:
\begin{align*}
    c'{}_v^{(2)} & \gets (c'{}_v^{(1)}, \{ c'{}_{(v,w)}^{(1)} \mid w \in \mathcal{N}(v) \}) \\
     %&\gets ((c_v^{(0)}, \{c_{(v,w)}^{(0)} \mid w \in \mathcal{N}(v)\}), \{c'{}_{(v,w)}^{(1)} \mid w \in \mathcal{N}(v)\})\\
    c'{}_{(v,w)}^{(2)} &\gets (c_{(v,w)}^{(0)}, \{c_v^{(0)}, c_w^{(0)}\}, \{c'{}_v^{(1)}, c'{}_w^{(1)}\})
\end{align*}

The labels of the original nodes can be rewritten
\begin{align*}
    c'{}_v^{(2)} & \gets ( c'{}_v^{(1)}, \{ c'{}_{(v,w)}^{(1)} \mid w \in \mathcal{N}(v) \} ) \\
    & \equiv ((c_v^{(0)}, \{c_{(v,w)}^{(0)} \mid w \in \mathcal{N}(v)\}), \{c'{}_{(v,w)}^{(1)} \mid w \in \mathcal{N}(v)\} ) \\
    & \equiv ( (c_v^{(0)}, \{\mathbf{X}^{E}_{(v,w)} \mid w \in \mathcal{N}(v)\}), \{c'{}_{(v,w)}^{(1)} \mid w \in \mathcal{N}(v)\} ) \\
    & \equiv ( (c_v^{(0)}, \{\mathbf{X}^{E}_{(v,w)} \mid w \in \mathcal{N}(v)\}), \{(\mathbf{X}^{E}_{(v,w)}, \{c_v^{(0)}, c_w^{(0)}\} ) \mid w \in \mathcal{N}(v)\} ) \\
    & \equiv (c_v^{(0)}, \{(\mathbf{X}^{E}_{(v,w)}, \{c_v^{(0)}, c_w^{(0)}\}) \mid w \in \mathcal{N}(v)\})\\
    & \equiv (c_v^{(0)}, \{(\mathbf{X}^{E}_{(v,w)}, c_w^{(0)}) \mid w \in \mathcal{N}(v)\}) \\
    & \equiv c_v^{(1)}
\end{align*}

We can observe that $c{}_v^{(1)} \equiv c'{}_v^{(2)}$, that is, two steps of 1-WL over graph $\mathcal{T}(G)$ outputs equivalent labels for the original nodes as one step of 1-WLE over graph G. Now, we only need to answer whether the labels of the newly added nodes add any information to the process. We can similarly rewrite the node labels of the new nodes
\begin{align*}
    c'{}_{(v,w)}^{(2)} &\gets (c_{(v,w)}^{(0)}, \{c_v^{(0)}, c_w^{(0)}\}, \{c'{}_v^{(1)}, c'{}_w^{(1)}\}) \\
    &\equiv ( c_{(v,w)}^{(0)}, \{ c'{}_v^{(1)}, c'{}_w^{(1)} \} ) \\
    &\equiv (\mathbf{X}^E_{(v,w)}, \{ (c_v^{(0)}, \{c_{(v,k)}^{(0)} \mid k \in \mathcal{N}(v)\}) , (c_w^{(0)}, \{c_{(w,j)}^{(0)} \mid j \in \mathcal{N}(w)\}) \} )\\
    &\equiv (\mathbf{X}^E_{(v,w)}, \{ (c_v^{(0)}, \{\mathbf{X}^E_{(v,k)} \mid k \in \mathcal{N}(v)\}) , (c_w^{(0)}, \{\mathbf{X}^E_{(w,j)} \mid j \in \mathcal{N}(w)\}) \} )\\
    &\equiv ( \{ c'{}_v^{(1)}, c'{}_w^{(1)} \} ) 
\end{align*}
since $\mathbf{X}^E_{(v,w)}$ is included in both hashes $c'{}_v^{(1)}$ and $c'{}_w^{(1)}$. Indeed, the same argument shows that for all $k \geq 2$, $c'{}_{(v,w)}^{(k)} \equiv \{ c'{}_v^{(k-1)}, c'{}_w^{(k-1)} \}$, since $c^{(k)}$ is always a refinement of $c^{(k-1)}$.

We now claim that every two iterations of 1-WL over graph $\mathcal{T}(G)$ lead to the same node refinement as a single iteration of 1-WLE over $G$. That is:
\begin{equation*}
    c'{}_v^{(2k)} \equiv c_v^{(k)} \tag{for all $k \geq 1$} 
\end{equation*}
We can prove this claim by induction, following the same argument as above.
\begin{align*}
    c'{}_v^{(2(k+1))} & \gets ( c'{}_v^{(2k+1)}, \{ c'{}_{(v,w)}^{(2k+1)} \mid w \in \mathcal{N}(v) \} ) \\
    & \equiv ((c'{}_v^{(2k)}, \{c'{}_{(v,w)}^{(2k)} \mid w \in \mathcal{N}(v)\}), \{c'{}_{(v,w)}^{(2k+1)} \mid w \in \mathcal{N}(v)\} ) \\
    & \equiv (c'{}_v^{(2k)}, \{c'{}_{(v,w)}^{(2k+1)} \mid w \in \mathcal{N}(v)\} ) \\
    & \equiv (c_v^{(k)}, \{c'{}_{(v,w)}^{(2k+1)} \mid w \in \mathcal{N}(v)\} ) \\
    & \equiv (c_v^{(k)}, \{ \{c'{}_v^{(2k)}, c'{}_w^{(2k)}\} \mid w \in \mathcal{N}(v)\})\\
    & \equiv (c_v^{(k)}, \{ \{c_v^{(k)}, c_w^{(k)}\} \mid w \in \mathcal{N}(v)\})\\
    & \equiv (c_v^{(k)}, \{ c_w^{(k)} \mid w \in \mathcal{N}(v)\})\\
    & \equiv (c_v^{(k)}, \{ (\mathbf{X}^E_{(v,w)}, c_w^{(k)}) \mid w \in \mathcal{N}(v)\})\\
    & \equiv c_v^{(k+1)}
\end{align*}
The penultimate equivalence follows since $\mathbf{X}^E_{(v,w)}$ is already considered in the k\textsuperscript{th} refinement of the nodes, i.e. $c_v^{(k)}$ and $c_w^{(k)}$, for all $k \geq 1$. The final equivalence follows by definition.

Therefore, we get the same partition of nodes for 1-WL over graph $\mathcal{T}(G)$ as for 1-WLE over $G$. Hence, the two are equivalent.

In essence, the newly added (edge) nodes add no information to the hash of the overall graph after the first iteration and act merely as messengers between nodes corresponding to neighboring nodes in the original graph $G$.
\end{proofE}

\begin{observation}
    As 1-WL(G) is equivalent to 2-WL(G) as shown by Immerman et al., 1-WL($\mathcal{T}$(G)) is equivalent to 2-WL($\mathcal{T}$(G)), and consequently, 1-WLE(G) $\equiv$ 2-WL($\mathcal{T}$(G)).
\end{observation}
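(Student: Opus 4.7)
The plan is to deduce the observation by transitivity from two previously established equivalences. The first ingredient is Theorem~\ref{thm:cre_incidence_cr_equivalence}, just proven, which provides an injective relabeling $f$ such that $f(1\text{-WLE}(G)) = 1\text{-WL}(\mathcal{T}(G))$ up to isomorphism. The second ingredient is the Immerman--Lander equivalence recalled in the preliminaries, asserting that 1-WL and 2-WL have the same distinguishing power on any graph. Applying this second result to $H := \mathcal{T}(G)$, and noting that $\mathcal{T}(G)$ is an ordinary graph with no edge features (all edge labels have been absorbed into the node labels of the newly introduced incidence nodes, so the Immerman--Lander statement applies verbatim), one immediately obtains $1\text{-WL}(\mathcal{T}(G)) \equiv 2\text{-WL}(\mathcal{T}(G))$.

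Chaining the two equivalences then yields
\[
1\text{-WLE}(G) \;\equiv\; 1\text{-WL}(\mathcal{T}(G)) \;\equiv\; 2\text{-WL}(\mathcal{T}(G)),
\]
which is exactly the claim. The only delicate point, and the step most worth spelling out, is that the symbol $\equiv$ is overloaded across different object types: the first link relates a node coloring on $V(G)$ to a node coloring on $V(\mathcal{T}(G))$ via an injective relabeling, while the second relates a node coloring on $V(\mathcal{T}(G))$ to a tuple coloring on $V(\mathcal{T}(G))^2$. I would address this by reading each $\equiv$ in the uniform sense of \emph{distinguishing power on inputs}, namely that two edge-labeled input graphs $G_1, G_2$ produce matching stabilized outputs on one side if and only if they do on the other. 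Distinguishing-power equivalence is manifestly transitive, so once both component equivalences are phrased this way the observation follows as a one-line corollary with no further computation required.
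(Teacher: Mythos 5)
Your proposal is correct and matches the paper's own (implicit) justification: the observation is obtained exactly by applying the Immerman--Lander 1-WL/2-WL equivalence to the ordinary (edge-feature-free) graph $\mathcal{T}(G)$ and chaining it with Theorem~\ref{thm:cre_incidence_cr_equivalence} via transitivity of distinguishing power. Your remark that $\equiv$ should be read uniformly as equivalence of distinguishing power is a useful clarification but does not change the argument.
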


\begin{theoremE}[Equivalence of 2-WL($\mathcal{T}$(G)) to 2-WL(G)]
\label{thm:2wl_equivalence}
    2-WL on graph G is equivalent to 2-WL over graph $\mathcal{T}$(G) so that there is an injective map $f$ for which f(2-WL(G)) = 2-WL($\mathcal{T}$(G)) up to isomorphism.
\end{theoremE}
\begin{proofE}
We provide a high-level proof sketch for this theorem. We start with some formulas, but then aim to give the intuition behind the proof, rather than providing pages of formulas.

We separate the node tuples into tuples formed only from original nodes, only from the newly added (edge) nodes -- added by the $\mathcal{T}$ operator, and tuples formed from a mix of the two.
\begin{align*}
    c_\mathbf{v}^{(0)} &\gets hash(G[\mathbf{v}]) \;\forall \mathbf{v} \in V^2\\
    c'{}_\mathbf{v}^{(0)} &\gets hash(\mathcal{T}(G)[\mathbf{v}]) \;\forall \mathbf{v} \in V^2\\
    c'{}_\mathbf{e}^{(0)} &\gets hash(\mathcal{T}(G)[\mathbf{e}]) \;\forall \mathbf{e} \in E^2\\
    c'{}_{(e,v)}^{(0)} &\gets hash(\mathcal{T}(G)[\{e,v\}]) \;\forall \{e,v\} \in E \times V
\end{align*}

As these subgraph hashes are only determined by the features of the nodes in the tuples and the existence/features of the edges between them, we can rewrite these initializations.
\begin{align*}
    c_\mathbf{v}^{(0)} &\gets hash( \{ \mathbf{X}^V_{\mathbf{v}_1}, \mathbf{X}^V_{\mathbf{v}_2} \}, \mathds{1}_{(\mathbf{v}_1, \mathbf{v}_2) \in E}, \mathbf{X}^E_{(\mathbf{v}_1, \mathbf{v}_2)}) \;\forall \mathbf{v} \in V^2\\
    c'{}_\mathbf{v}^{(0)} &\gets hash( \{ \mathbf{X}^V_{\mathbf{v}_1}, \mathbf{X}^V_{\mathbf{v}_2} \} ) \;\forall \mathbf{v} \in V^2 \tag{There are no edges between original nodes under operator $\mathcal{T}$}\\
    c'{}_\mathbf{e}^{(0)} &\gets hash( \{ \mathbf{X}^E_{\mathbf{e}_1}, \mathbf{X}^E_{\mathbf{e}_2} \} ) \;\forall \mathbf{e} \in E^2\tag{There are no edges between newly added nodes under operator $\mathcal{T}$}\\
    c'{}_{(e,v)}^{(0)} &\gets hash( \{ \mathbf{X}^E_{e}, \mathbf{X}^V_{v} \}, \mathds{1}_{(e,v) \in E_{\mathcal{T}(G)}}) \;\forall \{e,v\} \in E \times V
\end{align*}

As in the proof of Theorem \ref{thm:cre_incidence_cr_equivalence}, we can again WLOG replace the hash function with an identity function.
For the first iteration of 2-WL over graph G, we have the 2-tuple features:
\begin{align*}
    c_\mathbf{v}^{(1)} 
    &\gets (c_\mathbf{v}^{(0)}, \{c_\textbf{w}^{(0)} \mid \textbf{w} \in \mathcal{N}_1(\mathbf{v})\} \uplus \{c_\textbf{w}^{(0)} \mid \textbf{w} \in \mathcal{N}_2(\mathbf{v})\}\})\\
    &\equiv ( c_\mathbf{v}^{(0)}, \{c_{(u, \textbf{v}_2)}^{(0)} \mid u \in V\} \uplus \{c_{(\textbf{v}_1, u)}^{(0)} \mid u \in V\})\\
    &\equiv ( ( \{ \mathbf{X}^V_{\mathbf{v}_1}, \mathbf{X}^V_{\mathbf{v}_2} \}, \mathds{1}_{(\mathbf{v}_1, \mathbf{v}_2) \in E}, \mathbf{X}^E_{(\mathbf{v}_1, \mathbf{v}_2)}),\\
    &\quad\quad\{( \{ \mathbf{X}^V_{u}, \mathbf{X}^V_{\mathbf{v}_2} \}, \mathds{1}_{(u, \mathbf{v}_2) \in E}, \mathbf{X}^E_{(u, \mathbf{v}_2)}) \mid u \in V\} \uplus \\
    &\quad\quad\{( \{ \mathbf{X}^V_{\mathbf{v}_1}, \mathbf{X}^V_{u} \}, \mathds{1}_{(\mathbf{v}_1, u) \in E}, \mathbf{X}^E_{(\mathbf{v}_1, u)}) \mid u \in V\}\})
\end{align*}

For 2-WL over graph $\mathcal{T}(G)$ in the first iteration, we have the 2-tuple features
\begin{align*}
    c'{}_\mathbf{v}^{(1)} &\gets ( c'{}_\mathbf{v}^{(0)}, \{c'{}_w^{(0)} \mid w \in \mathcal{N}_1(\mathbf{v})\} \uplus \{c_w^{(0)} \mid w \in \mathcal{N}_2(\mathbf{v})\} )\\
    &\equiv ( c'{}_\mathbf{v}^{(0)}, \{c'{}_{(u, \textbf{v}_2)}^{(0)} \mid u \in V \cup E\} \uplus \{c'{}_{(\textbf{v}_1, u)}^{(0)} \mid u \in V \cup E\}\} ).
\end{align*}

There are three cases to consider. (i) For $\textbf{v}:=(\textbf{v}_1,\textbf{v}_2) \in V^2$,  we can simplify the above expression:
\begin{align*}
    c'{}_\mathbf{v}^{(1)} &\gets ( \{ \mathbf{X}^V_{\mathbf{v}_1}, \mathbf{X}^V_{\mathbf{v}_2} \} , \{c'{}_{(u, \textbf{v}_2)}^{(0)} \mid u \in V \cup E\} \uplus \{c'{}_{(\textbf{v}_1, u)}^{(0)} \mid u \in V \cup E\} )\\
    &\equiv ( \{ \mathbf{X}^V_{\mathbf{v}_1}, \mathbf{X}^V_{\mathbf{v}_2} \}, \{ \{ \mathbf{X}^V_{u}, \mathbf{X}^V_{\mathbf{v}_2} \} \mid u \in V\} \uplus \{ \{ \mathbf{X}^V_{\mathbf{v}_1}, \mathbf{X}^V_{u} \} \mid u \in V\} \uplus \\
    &\quad\quad\{( \{ \mathbf{X}^E_{e}, \mathbf{X}^V_{\mathbf{v}_1} \}, \mathds{1}_{(e,\mathbf{v}_1) \in E_{\mathcal{T}(G)}}) \mid e \in E\} \uplus \{( \{ \mathbf{X}^E_{e}, \mathbf{X}^V_{\mathbf{v}_2} \} ,\mathds{1}_{(e,\mathbf{v}_2) \in E_{\mathcal{T}(G)}}) \mid e \in E\}\})\\
    &\equiv ( \{ \mathbf{X}^V_{\mathbf{v}_1}, \mathbf{X}^V_{\mathbf{v}_2} \} , \\
    &\quad\quad\{( \{ \mathbf{X}^E_{e}, \mathbf{X}^V_{\mathbf{v}_1} \},\mathds{1}_{(e,\mathbf{v}_1) \in E_{\mathcal{T}(G)}}) \mid e \in E\} \uplus \{( \{ \mathbf{X}^E_{e}, \mathbf{X}^V_{\mathbf{v}_2} \},\mathds{1}_{(e,\mathbf{v}_2) \in E_{\mathcal{T}(G)}}) \mid e \in E\}\})\\
\end{align*}
The final equivalence follows because all node tuples $\textbf{v}:=(\textbf{v}_1,\textbf{v}_2) \in V^2$ will receive the sets $\{ \{ \mathbf{X}^V_{u}, \mathbf{X}^V_{\mathbf{v}_2} \} \mid u \in V\}$ and $\{ \{ \mathbf{X}^V_{\mathbf{v}_1}, \mathbf{X}^V_{u} \} \mid u \in V\}$, with the only difference being the $\mathbf{X}^V_{\mathbf{v}_1}$ and the $\mathbf{X}^V_{\mathbf{v}_2}$. There this information is equivalent to knowing $\{ \mathbf{X}^V_{\mathbf{v}_1}, \mathbf{X}^V_{\mathbf{v}_2} \}$.

What does this last line represent? From the first element, we have the set of features of the two nodes in $\textbf{v}$. The last two elements tell us how many edges (with associated edge features) the nodes of the given tuple $\textbf{v}$ have in the original graph. However, it doesn't tell us whether these edges in the original graph connect the two nodes in the tuple with each other, nor whether they connect to a common neighbor thereby forming a path of length two in the original graph. So similar to 1-WL, the first iteration essentially just tells us the degrees of the two nodes, and the set of features associated with those edges.

The edge tuples gain even less from the first iteration, since the ``edge nodes'' all have degree two. However, vertex-edge tuples will ``know'' whether the edge is incident to the vertex in the original graph, thanks to the indicator function in $c'{}_{(e,v)}^{(0)}$. And then after one iteration for neighboring nodes $u$ and $v$ (with $e=(u,v) \in E$), $c'{}_{(e,v)}^{(1)}$ will ``see'' the edge $(e,u)$ and $c'{}_{(e,u)}^{(1)}$ will ``see'' the edge $(e,v)$. 

As such, when we update $c'{}_\mathbf{v}^{(2)}$ in the next iteration, then $\mathbf{v}_1$ and $\mathbf{v}_2$ will ``know'' that they are connected to a node with the corresponding features by an edge with the given edge features. 
They will also ``know'' the edge and node features of any other neighbors they might have. But this is exactly the information contained in $c_\mathbf{v}^{(1)}$. In other words two iterations of 2-WL on $\mathcal{T}(G)$ is equivalent to one iteration of 2-WL on $G$, in terms of distinguishing nodes. 

By showing that vertex-edge and edge-edge tuples do not contain any additional information and then turning this into an inductive argument, we can confirm that $c'{}_\mathbf{v}^{(2k)} \equiv c{}_\mathbf{v}^{(k)}$ for all $\mathbf{v} \in V^2$ and for all $k \geq 0$. This is similar to the proof of Theorem \ref{thm:cre_incidence_cr_equivalence}.

% Consequently, we can devise a mapping $f_G$ such that information from cases (ii) and (iii) can be collapsed, and we can then write
% \begin{align*}
%     f_G(\text{2-WL}(G)) = \text{2-WL}(\mathcal{T}(G)).
% \end{align*}

\end{proofE}

% \begin{corollary}
%     As 1-WLE($G$) $\equiv$ 1-WL($\mathcal{T}(G)$), 1-WL($\mathcal{T}(G)$) $\equiv$ 2-WL($\mathcal{T}(G)$, and 2-WL($\mathcal{T}(G)$) $\equiv$ 2-WL($G$), consequently 1-WLE($G$) $\equiv$ 2-WL($G$).
% \end{corollary}

\begin{corollary}
    1-WLE($G$) $\equiv$ 2-WL($G$)
\end{corollary}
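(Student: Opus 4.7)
The plan is to obtain the corollary by simply chaining the two equivalences that were just established, using the incidence-graph operator $\mathcal{T}$ as an intermediary. So this is really a one-line transitivity argument rather than a fresh proof, and I would present it as such.

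First, I would invoke the Observation immediately preceding the corollary: Theorem \ref{thm:cre_incidence_cr_equivalence} gives 1-WLE($G$) $\equiv$ 1-WL($\mathcal{T}(G)$), and the classical Immerman--Lander result that 1-WL and 2-WL induce the same partition on any graph upgrades this to 1-WLE($G$) $\equiv$ 2-WL($\mathcal{T}(G)$). Next, I would apply Theorem \ref{thm:2wl_equivalence}, which states that running 2-WL on $\mathcal{T}(G)$ produces (up to the injective relabeling $f$ restricted to the original vertex tuples) the same refinement as running 2-WL on $G$ itself. Composing the two injective maps then yields 1-WLE($G$) $\equiv$ 2-WL($G$), as claimed.

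The only subtlety I would flag is that Theorem \ref{thm:2wl_equivalence} compares the refinements on the \emph{original} node tuples $V^2$, whereas 2-WL on $\mathcal{T}(G)$ also partitions tuples involving edge-nodes; but since the corollary is about distinguishing power on $G$, I would restrict both colorings to $V^2$ (and similarly to $V$ when chaining with 1-WLE, which only produces labels for vertices of $G$). With this restriction the two injective maps compose cleanly and no additional work is required. There is no real obstacle here, the result is essentially a bookkeeping corollary, and I expect the whole proof to fit in two or three lines citing Theorem \ref{thm:cre_incidence_cr_equivalence}, the Observation, and Theorem \ref{thm:2wl_equivalence} in sequence.
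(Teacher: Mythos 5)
Your proposal matches the paper's proof exactly: it chains Theorem \ref{thm:cre_incidence_cr_equivalence}, the Immerman--Lander equivalence of 1-WL and 2-WL applied to $\mathcal{T}(G)$, and Theorem \ref{thm:2wl_equivalence} in the same order. The extra remark about restricting the 2-WL coloring of $\mathcal{T}(G)$ to the original vertex tuples is a reasonable clarification but does not change the argument.
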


\begin{proofE}
    1-WLE($G$) $\equiv$ 1-WL($\mathcal{T}(G)$) by Theorem \ref{thm:cre_incidence_cr_equivalence}, 1-WL($\mathcal{T}(G)$) $\equiv$ 2-WL($\mathcal{T}(G)$ by the equivalence of 1-WL and 2-WL, and 2-WL($\mathcal{T}(G)$) $\equiv$ 2-WL($G$) by Thereom \ref{thm:2wl_equivalence}. 
\end{proofE}

We see value in pointing out here that using edge features in concatenation phase of every iteration, as shown in Algorithm \ref{alg:1wle}, is not identical with injecting edge features into the node features as a preprocessing step, which is used to overcome the deficiencies of some GNN architectures that do not natively admit edge features. The proof by counter-example can be found in Theorem \ref{thm:reduction_insufficiency}.

\begin{theoremE} \label{thm:reduction_insufficiency}
    1-WL over a graph with edge feature reduction (where edge features are concatenated with the neighboring node features before running the algorithm) is strictly weaker than 1-WLE.
\end{theoremE}
\begin{proofE}
    First, we start by focusing on the edge feature tuples, that is, a simple neighborhood aggregation of the edges.
    \begin{align*}
        x_v \gets \{X^E_{(u,v)}\: \forall u \in \mathcal{N}(v)\}
    \end{align*}
    For this simplified reduction, we provide the following counterexample in Figure \ref{fig:counterexample}. In this graph, assume all node features are identical. For this structure, 1-WLE and 1-WL with reduced edges (according to the simplified definition above) output different node hashes: the former can distinguish $x_1$ and $x_1'$ while the latter cannot. This is because node labels are symmetric for both sides of the graph when using initial features with 1-WL. With 1-WLE, they will differ after two iterations as $x_1$ is connected over an edge with label $2$ to $x_4$, whereas $x'_1$ is connected to $x'_4$ over an edge with label 1.
    \begin{figure}[!tbh]
    \centering
    \begin{tikzpicture}[every node/.style={scale=0.75}, node distance={20mm}, thick, main/.style = {draw, circle}] 
    \node[main] (1) {$x_1$}; 
    \node[main] (2) [above left of=1] {$x_2$}; 
    \node[main] (3) [below left of=2] {$x_3$}; 
    \node[main] (4) [below left of=1] {$x_4$}; 
    \node[main] (5) [below of=4] {$x_5$}; 
    \node[main] (6) [right of=1] {$x'_1$}; 
    \node[main] (7) [above right of=6] {$x'_2$}; 
    \node[main] (8) [below right of=7] {$x'_3$}; 
    \node[main] (9) [below right of=6] {$x'_4$}; 
    \node[main] (10) [below of=9] {$x'_5$}; 
    \draw (1) -- node[midway, fill=white] {0} (6); 
    \draw (1) -- node[midway, fill=white] {1} (2); 
    \draw (1) -- node[midway, fill=white] {2} (4); 
    \draw (2) -- node[midway, fill=white] {2} (3); 
    \draw (3) -- node[midway, fill=white] {1} (4); 
    \draw (4) -- node[midway, fill=white] {3} (5); 
    \draw (6) -- node[midway, fill=white] {2} (7); 
    \draw (7) -- node[midway, fill=white] {1} (8); 
    \draw (6) -- node[midway, fill=white] {1} (9); 
    \draw (8) -- node[midway, fill=white] {2} (9); 
    \draw (9) -- node[midway, fill=white] {3} (10); 
    \end{tikzpicture}
    \caption{Counterexample for the expressive power of a simplified reduction. Assume that all nodes are unlabeled, and edge labels are the numbers written over them.}
    \label{fig:counterexample}
    \end{figure}
\end{proofE}

\section{\textsc{Graphtester}: Theoretical Analysis of Graph Datasets}
\label{sec:graphtester}

\begin{figure*}
    \centering
    \includegraphics[width=\textwidth]{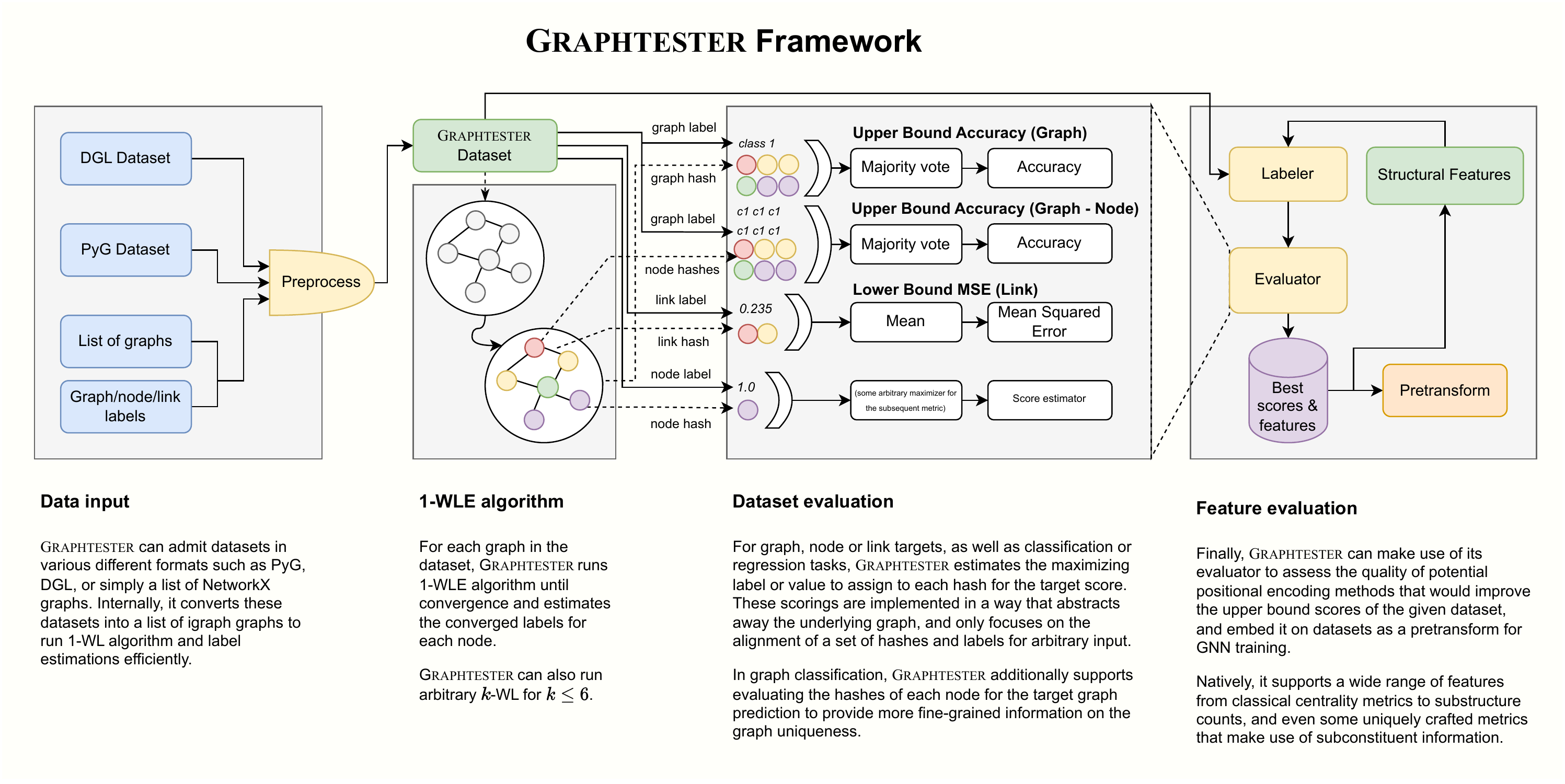}
    \caption{An overview of the \textsc{Graphtester} framework. Overall, the package has four major components: preprocessing, 1-WLE algorithm, dataset evaluation and feature evaluation.}
    \label{fig:gt}
\vskip -0.2in
\end{figure*}

\begin{table*}[!thb]
    \centering
    \caption{Analysis of over 40 datasets, providing upper bounds on GNN performance when using different numbers of GNN layers and different input features (when available). Layer count 0 is not included for the cases with edge features, since they are equivalent to the cases where they are not present.}
    \label{tab:dataset_analysis}
    \resizebox{\textwidth}{!}{%
    \begin{tabular}{l l l r r r r r r r r r r r r r r}
    \toprule
    Dataset name & Task & Metric  & \multicolumn{4}{c}{w/out features} & \multicolumn{4}{c}{w/node features} & \multicolumn{3}{c}{w/edge features} & \multicolumn{3}{c}{w/both features} \\
    \cmidrule(lr){4-7} \cmidrule(lr){8-11} \cmidrule(lr){12-14} \cmidrule(lr){15-17}
    & & & 0 & 1 & 2 & 3 & 0 & 1 & 2 & 3 & 1 & 2 & 3 & 1 & 2 & 3 \\
    \midrule
    AIDS~\citep{morris2020tudatasets} & Graph Cl. & Accuracy ↑ & 0.9985 & 0.9985 & 1.0 & 1.0 & 1.0 & 1.0 & 1.0 & 1.0 & 0.9995 & 1.0 & 1.0 & 1.0 & 1.0 & 1.0 \\
    AmazonCoBuy~\citep{amazoncobuy2015} & Node Cl. & Accuracy ↑ & 0.3751 & 0.3778 & 0.9777 & 0.9871 & 0.9996 & 1.0 & 1.0 & 1.0 & 0.9903 & 0.9903 & 0.9903 & 1.0 & 1.0 & 1.0 \\
    BZR\_MD~\citep{morris2020tudatasets} & Graph Cl. & Accuracy ↑ & 0.6503 & 0.6503 & 0.6503 & 0.6503 & 0.8922 & 0.8922 & 0.8922 & 0.8922 & 1.0 & 1.0 & 1.0 & 1.0 & 1.0 & 1.0 \\
    BZR~\citep{morris2020tudatasets} & Graph Cl. & Accuracy ↑ & 0.8198 & 0.9432 & 0.9728 & 0.9778 & 1.0 & 1.0 & 1.0 & 1.0 & - & - & - & - & - & - \\
    CIFAR10~\citep{dwivedi2022benchmarking} & Graph Cl. & Accuracy ↑ & 0.1507 & 1.0 & 1.0 & 1.0 & 1.0 & 1.0 & 1.0 & 1.0 & 1.0 & 1.0 & 1.0 & 1.0 & 1.0 & 1.0 \\
    Citeseer~\citep{coraciteseer2008} & Node Cl. & Accuracy ↑ & 0.2107 & 0.2438 & 0.4968 & 0.7151 & 0.9994 & 0.9997 & 0.9997 & 0.9997 & - & - & - & - & - & - \\
    CLUSTER~\citep{dwivedi2022benchmarking} & Node Cl. & Accuracy ↑ & 0.9488 & 0.9488 & 1.0 & 1.0 & 1.0 & 1.0 & 1.0 & 1.0 & - & - & - & - & - & - \\
    CoauthorCS~\citep{shchur2018pitfalls} & Node Cl. & Accuracy ↑ & 0.2256 & 0.2298 & 0.8756 & 0.9914 & 1.0 & 1.0 & 1.0 & 1.0 & 1.0 & 1.0 & 1.0 & 1.0 & 1.0 & 1.0 \\
    CoauthorPhysics~\citep{shchur2018pitfalls} & Node Cl. & Accuracy ↑ & 0.5052 & 0.5077 & 0.9493 & 0.9979 & 1.0 & 1.0 & 1.0 & 1.0 & 1.0 & 1.0 & 1.0 & 1.0 & 1.0 & 1.0 \\
    COIL-DEL~\citep{morris2020tudatasets} & Graph Cl. & Accuracy ↑ & 0.1151 & 0.7992 & 0.8849 & 0.8851 & 1.0 & 1.0 & 1.0 & 1.0 & 0.9446 & 0.9503 & 0.9503 & 1.0 & 1.0 & 1.0 \\
    COIL-RAG~\citep{morris2020tudatasets} & Graph Cl. & Accuracy ↑ & 0.0444 & 0.0851 & 0.0851 & 0.0851 & 1.0 & 1.0 & 1.0 & 1.0 & 0.88 & 0.88 & 0.88 & 1.0 & 1.0 & 1.0 \\
    COLLAB~\citep{morris2020tudatasets} & Graph Cl. & Accuracy ↑ & 0.607 & 0.9842 & 0.9844 & 0.9844 & - & - & - & - & - & - & - & - & - & - \\
    Cora~\citep{coraciteseer2008} & Node Cl. & Accuracy ↑ & 0.3021 & 0.3143 & 0.764 & 0.9465 & 1.0 & 1.0 & 1.0 & 1.0 & - & - & - & - & - & - \\
    COX2\_MD~\citep{morris2020tudatasets} & Graph Cl. & Accuracy ↑ & 0.5776 & 0.5776 & 0.5776 & 0.5776 & 0.901 & 0.901 & 0.901 & 0.901 & 1.0 & 1.0 & 1.0 & 1.0 & 1.0 & 1.0 \\
    COX2~\citep{morris2020tudatasets} & Graph Cl. & Accuracy ↑ & 0.7923 & 0.833 & 0.9101 & 0.9379 & 1.0 & 1.0 & 1.0 & 1.0 & - & - & - & - & - & - \\
    Cuneiform~\citep{morris2020tudatasets} & Graph Cl. & Accuracy ↑ & 0.206 & 0.206 & 0.206 & 0.206 & 1.0 & 1.0 & 1.0 & 1.0 & 1.0 & 1.0 & 1.0 & 1.0 & 1.0 & 1.0 \\
    DD~\citep{morris2020tudatasets} & Graph Cl. & Accuracy ↑ & 0.837 & 1.0 & 1.0 & 1.0 & 1.0 & 1.0 & 1.0 & 1.0 & - & - & - & - & - & - \\
    DHFR\_MD~\citep{morris2020tudatasets} & Graph Cl. & Accuracy ↑ & 0.7023 & 0.7023 & 0.7023 & 0.7023 & 0.8702 & 0.8702 & 0.8702 & 0.8702 & 1.0 & 1.0 & 1.0 & 1.0 & 1.0 & 1.0 \\
    DHFR~\citep{morris2020tudatasets} & Graph Cl. & Accuracy ↑ & 0.6587 & 0.8585 & 0.9259 & 0.9656 & 1.0 & 1.0 & 1.0 & 1.0 & - & - & - & - & - & - \\
    ENZYMES~\citep{morris2020tudatasets} & Graph Cl. & Accuracy ↑ & 0.385 & 0.9883 & 1.0 & 1.0 & 1.0 & 1.0 & 1.0 & 1.0 & - & - & - & - & - & - \\
    ER\_MD~\citep{morris2020tudatasets} & Graph Cl. & Accuracy ↑ & 0.7018 & 0.7018 & 0.7018 & 0.7018 & 0.8655 & 0.8655 & 0.8655 & 0.8655 & 1.0 & 1.0 & 1.0 & 1.0 & 1.0 & 1.0 \\
    Fingerprint~\citep{morris2020tudatasets} & Graph Cl. & Accuracy ↑ & 0.5186 & 0.5304 & 0.5343 & 0.5371 & 0.9257 & 0.9257 & 0.9257 & 0.9257 & 0.8596 & 0.8596 & 0.8596 & 0.9257 & 0.9257 & 0.9257 \\
    FRANKENSTEIN~\citep{morris2020tudatasets} & Graph Cl. & Accuracy ↑ & 0.6338 & 0.7812 & 0.8891 & 0.8955 & 1.0 & 1.0 & 1.0 & 1.0 & - & - & - & - & - & - \\
    IMDB-BINARY~\citep{morris2020tudatasets} & Graph Cl. & Accuracy ↑ & 0.606 & 0.886 & 0.896 & 0.896 & - & - & - & - & - & - & - & - & - & - \\
    IMDB-MULTI~\citep{morris2020tudatasets} & Graph Cl. & Accuracy ↑ & 0.4413 & 0.6327 & 0.6393 & 0.6393 & - & - & - & - & - & - & - & - & - & - \\
    MNIST~\citep{dwivedi2022benchmarking} & Graph Cl. & Accuracy ↑ & 0.2075 & 0.9999 & 1.0 & 1.0 & 1.0 & 1.0 & 1.0 & 1.0 & 1.0 & 1.0 & 1.0 & 1.0 & 1.0 & 1.0 \\
    MSRC\_9~\citep{morris2020tudatasets} & Graph Cl. & Accuracy ↑ & 0.3122 & 1.0 & 1.0 & 1.0 & 1.0 & 1.0 & 1.0 & 1.0 & - & - & - & - & - & - \\
    Mutagenicity~\citep{morris2020tudatasets} & Graph Cl. & Accuracy ↑ & 0.6255 & 0.8427 & 0.9403 & 0.9555 & 0.923 & 0.9813 & 0.9988 & 1.0 & 0.9175 & 0.9592 & 0.9744 & 0.982 & 0.9988 & 1.0 \\
    MUTAG~\citep{morris2020tudatasets} & Graph Cl. & Accuracy ↑ & 0.8617 & 0.9149 & 0.9628 & 0.9681 & 0.9309 & 0.9574 & 0.9947 & 1.0 & 0.9362 & 0.9894 & 0.9947 & 0.9574 & 0.9947 & 1.0 \\
    NCI109~\citep{morris2020tudatasets} & Graph Cl. & Accuracy ↑ & 0.6346 & 0.8551 & 0.9891 & 0.9932 & 0.9166 & 0.9973 & 0.9988 & 0.999 & - & - & - & - & - & - \\
    NCI1~\citep{morris2020tudatasets} & Graph Cl. & Accuracy ↑ & 0.637 & 0.8521 & 0.9922 & 0.9942 & 0.9134 & 0.9954 & 0.9981 & 0.9983 & - & - & - & - & - & - \\
    ogbg-molbbbp~\citep{hu2020ogb} & Graph Cl. & Accuracy ↑ & 0.7916 & 0.9225 & 0.9799 & 0.9819 & 0.9951 & 0.9956 & 0.9956 & 0.9956 & 0.9848 & 0.9872 & 0.9872 & 0.9956 & 0.9956 & 0.9956 \\
    ogbg-molesol~\citep{hu2020ogb} & Graph Reg. & MSE ↓ & 2.5357 & 1.1065 & 0.7343 & 0.7208 & 0.0215 & 0.0017 & 0.0005 & 0.0005 & 0.4403 & 0.3907 & 0.3859 & 0.0016 & 0.0005 & 0.0005 \\
    ogbg-molfreesolv~\citep{hu2020ogb} & Graph Reg. & MSE ↓ & 12.3188 & 7.5293 & 6.434 & 6.3917 & 0.0877 & 0.0002 & 0.0002 & 0.0002 & 3.2991 & 3.0111 & 3.0111 & 0.0 & 0.0 & 0.0 \\
    ogbg-molhiv~\citep{hu2020ogb} & Graph Cl. & Accuracy ↑ & 0.9652 & 0.9744 & 0.9915 & 0.9932 & 0.9969 & 0.9997 & 1.0 & 1.0 & 0.9928 & 0.9965 & 0.9966 & 0.9997 & 1.0 & 1.0 \\
    ogbg-mollipo~\citep{hu2020ogb} & Graph Reg. & MSE ↓ & 1.3464 & 0.8129 & 0.0816 & 0.0673 & 0.0099 & 0.0005 & 0.0002 & 0.0 & 0.0674 & 0.0415 & 0.0403 & 0.0005 & 0.0002 & 0.0 \\
    ogbn-arxiv~\citep{hu2020ogb} & Node Cl. & Accuracy ↑ & 0.1613 & 0.18 & 0.7682 & 0.9568 & 0.9998 & 1.0 & 1.0 & 1.0 & - & - & - & - & - & - \\
    PATTERN~\citep{dwivedi2022benchmarking} & Node Cl. & Accuracy ↑ & 0.3341 & 0.3365 & 1.0 & 1.0 & 1.0 & 1.0 & 1.0 & 1.0 & - & - & - & - & - & - \\
    PROTEINS~\citep{morris2020tudatasets} & Graph Cl. & Accuracy ↑ & 0.7323 & 0.9524 & 0.9748 & 0.9748 & 1.0 & 1.0 & 1.0 & 1.0 & - & - & - & - & - & - \\
    PTC\_FM~\citep{morris2020tudatasets} & Graph Cl. & Accuracy ↑ & 0.6619 & 0.8367 & 0.894 & 0.894 & 0.9169 & 0.9599 & 0.9828 & 0.9828 & 0.9226 & 0.9542 & 0.9542 & 0.9742 & 0.9971 & 0.9971 \\
    PTC\_FR~\citep{morris2020tudatasets} & Graph Cl. & Accuracy ↑ & 0.6923 & 0.8661 & 0.9259 & 0.9259 & 0.9288 & 0.9772 & 0.9886 & 0.9886 & 0.9402 & 0.963 & 0.963 & 0.9858 & 0.9972 & 0.9972 \\
    PTC\_MM~\citep{morris2020tudatasets} & Graph Cl. & Accuracy ↑ & 0.6696 & 0.8512 & 0.9048 & 0.9048 & 0.9256 & 0.9732 & 0.9881 & 0.9881 & 0.9345 & 0.9554 & 0.9554 & 0.9851 & 0.997 & 0.997 \\
    PTC\_MR~\citep{morris2020tudatasets} & Graph Cl. & Accuracy ↑ & 0.6512 & 0.8401 & 0.8983 & 0.8983 & 0.9186 & 0.9797 & 0.9913 & 0.9913 & 0.907 & 0.939 & 0.939 & 0.9826 & 0.9942 & 0.9942 \\
    Pubmed~\citep{pubmed2012} & Node Cl. & Accuracy ↑ & 0.3994 & 0.4134 & 0.6901 & 0.9251 & 0.9999 & 1.0 & 1.0 & 1.0 & - & - & - & - & - & - \\
    REDDIT-BINARY~\citep{morris2020tudatasets} & Graph Cl. & Accuracy ↑ & 0.8385 & 1.0 & 1.0 & 1.0 & - & - & - & - & - & - & - & - & - & - \\
    REDDIT-MULTI-5K~\citep{morris2020tudatasets} & Graph Cl. & Accuracy ↑ & 0.5517 & 1.0 & 1.0 & 1.0 & - & - & - & - & - & - & - & - & - & - \\
    ZINC~\citep{dwivedi2022benchmarking} & Graph Reg. & MSE ↓ & 3.7182 & 2.9306 & 0.2457 & 0.0251 & 0.5056 & 0.0 & 0.0 & 0.0 & 0.5779 & 0.0027 & 0.0021 & 0.0 & 0.0 & 0.0 \\
    \bottomrule
    \end{tabular}%
    }
\vskip -0.15in
\end{table*}

In this section, we introduce \textsc{Graphtester}, our theoretical framework for analyzing graph datasets in terms of the GNN and graph transformers' capability to model them. We describe the methodology used to compute the score upper bounds for varying number of layers, and how we assess the impact of positional encodings and edge features on the graph transformers' performance. We analyze over 40 graph datasets (refer to Table~\ref{tab:dataset_analysis}) and provide insights into the maximum achievable metrics for different tasks, considering both the presence and absence of original node/edge 
features.

\subsection{Python Package: \textsc{Graphtester}}
\label{sec:python_package}

We provide \textsc{Graphtester} to the research community in the form of a Python package that automates the process of evaluating graph datasets and positional encodings for their processing. This package provides an easy-to-use interface for practitioners, allowing them to perform in-depth analysis of their datasets and make informed decisions on the design of GNNs and graph transformers for their tasks.

\subsection{Data loading}
\label{sec:data_loading}

\textsc{Graphtester} admits datasets in various different formats such as PyG \citep{pyg2019}, DGL \citep{wang2019dgl}, or simply a list of NetworkX \citep{networkx2008} graphs with associated labels. Internally, \textsc{Graphtester} converts them to igraph \citep{igraph2006} objects to efficiently run various isomorphism and labeling algorithms. The datasets analyzed in this paper can be simply loaded by their names.

% \begin{lstlisting}[language=Python]
% import graphtester as gt
% dataset = gt.load("ZINC")
% \end{lstlisting}

\subsection{Running 1-WL Algorithm}
\label{sec:graphtester_1wl}

After preprocessing the input dataset and converting it to internal \textit{Dataset} format, \textsc{Graphtester} is able to run 1-WLE on all the graphs in the dataset efficiently until convergence. The final node labels then can be used to create graph-level, link-level and node-level hashes that stays stable across the dataset.

% \begin{lstlisting}[language=Python]
% from graphtester import (
%     weisfeiler_lehman_test as wl_test,
%     k_weisfeiler_lehman_test as kwl_test
% )
% G1, G2 = dataset.graphs[:2]
% is_iso = wl_test(G1, G2)
% is_iso_kwl = kwl_test(G1, G2, k=4)
% \end{lstlisting}

In addition to 1-WL, \textsc{Graphtester} can run $k$-WL algorithm for any $k\leq6$. To the knowledge of the authors, there are no other functional open source implementations of $k$-WL that takes $k$ as an input parameter. \textsc{Graphtester} can also run the folklore variant of $k$-WL, which is more expressive than the default variant.

\subsection{Computing Score Upper Bounds}
\label{sec:computing_bounds}

For calculating the upper score bounds associated with a specified number of layers, we utilize the congruence of Graph Neural Networks (GNNs) and graph transformers with the 1-Weisfeiler-Lehman test (1-WLE), as established earlier in the paper. For each dataset and 1-WLE iteration $k\geq1$, \textsc{Graphtester} determines the hash values for every node within a particular dataset. Through these node hashes, graph hash can calculated by hashing the lexicographically sorted hash of all nodes in a graph. Similarly, link hashes are estimated from lexicographically sorted hashes of interconnected nodes.

For classification measures such as F1 score and accuracy, \textsc{Graphtester} assigns a label to each node, graph or link hash based on the majority rule, following the methodology outlined by Zopf \citep{zopf20221wl}. For regression measures like Mean Square Error (MSE), it assigns the value that minimizes the estimate, which most frequently happens to be the mean. A dataset can be loaded and evaluated through this approach in only a couple of lines of code.

\begin{lstlisting}[language=Python]
import graphtester as gt
dataset = gt.load("ZINC")
evaluation = gt.evaluate(dataset)
print(evaluation.as_dataframe())
\end{lstlisting}

See Table~\ref{tab:dataset_analysis} for the estimation of maximum achievable target metrics for different datasets that has varying tasks and targets. We have estimated the best scores for these datasets in the presence of node and edge features separately by running the 1-WLE algorithm for up to 3 iterations. Considering that 1-WL converges in 2 iterations for nearly all graphs \citep{Kiefer2020WL}, we believe that our results paint a near-definitive picture on what can be theoretically achievable in these datasets.

Overall, a considerable number of datasets in the literature appear to be non-fully-solvable for the target metrics. For the ones that are solvable, often more than a single layer is required. One other important observation here is the need for using available edge features in 1-WL context to achieve better upper bounds --- note that this is not the same as combining edge features with node features as a pre-processing step and running a GNN over it as required for some architectures that do not natively admit edge features \citep{xu2018powerful}, as shown in Theorem \ref{thm:reduction_insufficiency}. 

\begin{figure}[!bth]
    \centering
    \includegraphics[width=0.45\textwidth]{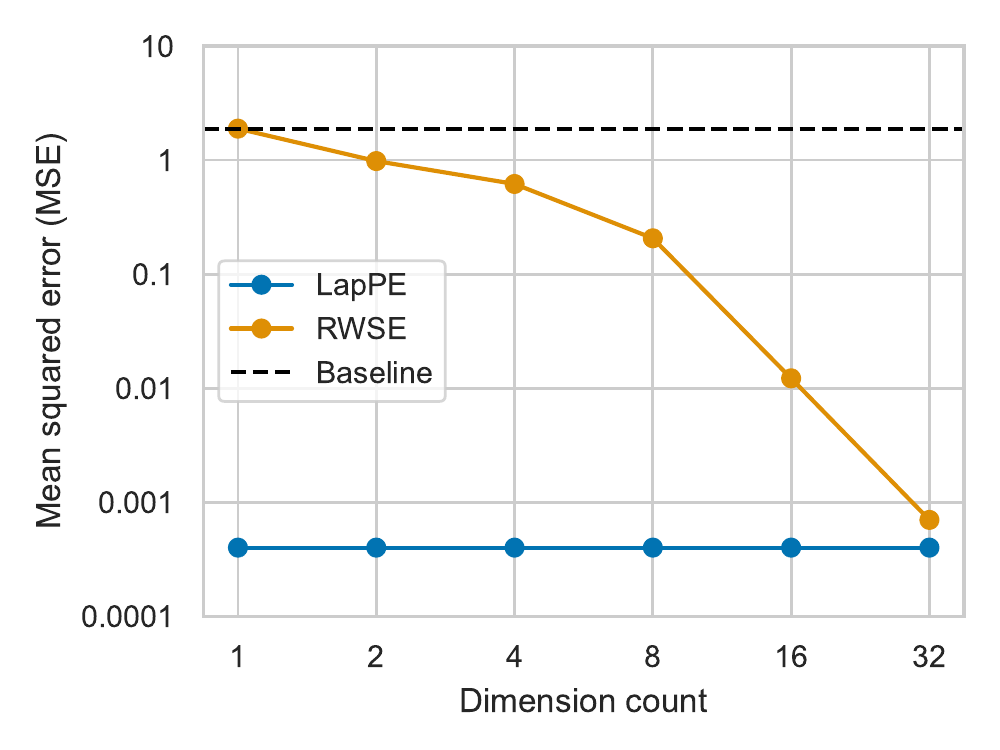}
    \vskip -0.1in
    \caption{Feature evaluation results for \textbf{ZINC} dataset from \textsc{Graphtester} framework for RWSE and LapPE encoding methods. "Dimension count" refers to the number of eigenvectors used for LapPE method, and the walk steps evaluated in RWSE, both of which corresponds to the positional encoding dimensionality. The values are Mean Squared Error, for each node hash mapping to their graph's label. Lower is better.}
    \label{fig:lappe-vs-rwse}
    \vskip -0.2in
\end{figure}

\subsection{Assessing the Impact of Additional Features}
\label{sec:assessing_impact}

Having evaluated the datasets with respect to the application of available node and edge features, an ensuing question emerges: Can these upper boundaries be enhanced, potentially improving overall GNN/Transformer performances on such graph datasets, by incorporating deterministic, pre-computed metrics derived from the literature? These metrics have been examined in the context of both GNNs and Transformers, most notably as Subgraph Counting \citep{bouritsas2020subgraphcounting} in the case of the former, and the positional encoding concept for the latter.

For this target, \textsc{Graphtester} offers an interface for researchers to try out various metrics and their combinations in the context of 1-WLE test to assess the potential upper bound improvements. A straightforward use case for this interface is to answer the question "how many dimensions of positional encoding do I need for my dataset?". 

To answer this question in the context of ZINC dataset, we analyzed the methods random-walk structural encoding (RWSE) and Laplacian eigenvectors encoding (LapPE) in identifiability of each graph through individual node hashes. The results can be seen in Figure \ref{fig:lappe-vs-rwse}. It is somewhat surprising to note that although RWSE highly benefits from increased step counts, LapPE provides optimal results even in a single iteration for ZINC dataset.

Another possible use case for feature evaluation might appear to be to choose the encoding method that brings the most benefits in the target upper bound score. However, such use of 1-WL framework to improve GNN performance does not seem to have a strong basis. Indeed, Figure \ref{fig:lappe-vs-rwse} is a counter-example here: although LapPE provides better upper bound scores in node-based matching of graphs to their labels, RWSE have proven to be the better encoding in GraphGPS study \citep{yuan2021graphgps}. A possible reason for the mismatch is also noted in the referenced work, and is sourced from the domain mismatch of LapLE in molecular datasets. It is argued that improved node-based identifiability only leads the network to overfit to the spurious variance introduced by the positional encoding. We recommend to choose the encoding method according to the domain of the task, but possibly tune the parameters of the encoding via \textsc{Graphtester}, ideally choosing the minimal encoding that provides a sufficient level of identifiability.

\subsection{Injecting Arbitrary Features into Training Pipelines}
\label{sec:injecting_features}

After analyzing a dataset and evaluating potential positional encodings, labeling methods implemented in \textsc{Graphtester} package can easily be embedded into the training pipeline. For this purpose, we expose a pretransform method in the package itself, that can admit different feature names, and be provided to PyG dataset objects to transform datasets before the training.

% \begin{lstlisting}[language=Python]
% import graphtester as gt
% dataset = TUDataset(
%   root=SCRIPT_PATH / "data" / "MUTAG",
%   name="MUTAG",
%   pre_transform=gt.pretransform(
%     features=["Eigenvector centrality"]
%   ),
% )
% \end{lstlisting}

\begin{table}[!bt]
  \caption{Performance of seven simple \textsc{Graphtester} features when used as positional encodings for GraphGPS. We keep the same setup as the best GPS model, only changing the positional encodings. Mean and standard deviation are reported over five runs each.}
  \label{tab:gps_performance}
  \centering
  \resizebox{0.5\textwidth}{!}{%
  \begin{tabular}{lccccc}
    \toprule
    \textbf{Feature}& \textbf{MNIST $\uparrow$} & \textbf{CIFAR10 $\uparrow$} & \textbf{PATTERN $\uparrow$}  \\
    \midrule
    %DGN~\citep{beaini2021directional}               & 0.168 $\pm$  0.003 & - & \textbf{72.838 $\pm$  0.417} & 86.680 $\pm$  0.034 & - \\
    %CIN~\citep{bodnar2021weisfeiler}                & \underline{0.079 $\pm$  0.006} & - & - & - & - \\
    %GIN-AK+~\citep{zhao2021stars}            & \underline{0.080 $\pm$  0.001} & - & 72.19 $\pm$  0.13 & \textbf{86.850 $\pm$  0.057} &  \\
    %K-Subgraph SAT~\citep{chen2022structure}    & 0.094 $\pm$  0.008 &  - & - & \underline{86.848 $\pm$  0.037} & 77.856 $\pm$  0.104 & \\
    %EGT~\citep{hussain2022global}           & 0.108 $\pm$ 0.009& \textbf{98.173 $\pm$ 0.087} & 68.702 $\pm$ 0.409 & 86.821 $\pm$ 0.020 & \textbf{79.232 $\pm$ 0.348} \\
    %\midrule
    Best GPS        & 98.051 $\pm$ 0.126 & 72.298 $\pm$ 0.356 & 86.685 $\pm$ 0.059\\
    \midrule
    Local transitivity& 98.016 $\pm$  0.054 & 72.466 $\pm$  0.273 & 85.979 $\pm$  0.179 \\
Eccentricity& 98.030 $\pm$  0.148 & 71.830 $\pm$  0.695 & 85.805 $\pm$  0.405 \\
Eigenvector centrality& 97.996 $\pm$  0.084 & 72.514 $\pm$  0.268 & 86.234 $\pm$  0.215\\
Burt's constraint& 97.890 $\pm$  0.072 & 72.456 $\pm$  0.421 & 86.302 $\pm$  0.150 \\
Closeness centrality& 97.890 $\pm$  0.070 & 73.006 $\pm$  0.380 & 85.959 $\pm$  0.077 \\
Betweenness centrality& 97.956 $\pm$  0.067 & 72.140 $\pm$  0.538 & 86.267 $\pm$  0.288 \\
Harmonic centrality& 97.974 $\pm$  0.156 & 72.282 $\pm$  0.314 & 85.889 $\pm$  0.060 \\

    \bottomrule
  \end{tabular}
  }
\vskip -0.1in
\end{table}

\subsection{Performance on Real-World Datasets}
\label{sec:gps}

\textsc{Graphtester} implements various classical centrality metrics and positional encoding methods for graph dataset analysis and pre-transform. To demonstrate that \textsc{Graphtester} features can be straightforwardly used in practice and even the simplest centrality metrics may carry value, we add them as positional encodings to GraphGPS~\citep{rampavsek2022recipe_graphGPS} and measure the performance of the resulting model. After the features are generated for nodes, we apply one linear layer to encode them together with input node features. Results for seven \textsc{Graphtester} features are summarized in Table~\ref{tab:gps_performance}. We can observe that all tested features can get relatively close to the best encodings on MNIST, CIFAR10, and PATTERN, even beating the best encodings on CIFAR10.

\subsection{A Synthetic Dataset for Benchmarking of Node and Edge Features}
\label{sec:dataset}

Additional features provide a way for GNNs and GTs to identify nodes in a graph. Our findings reinforce the efficacy of these methods in enhancing node and graph identifiability. Nevertheless, there remains a notable gap in the existing literature: the absence of a dataset specifically designed to benchmark features such as positional encoding methods based on their performance across different graph classes.

As the final contribution of this work, we address this shortfall by introducing a synthetic graph dataset that is provided as part of the \textsc{Graphtester} package. This dataset uniquely serves as a rigorous testing ground for the effectiveness of node and edge pre-coloring methods within the 1-Weisfeiler-Lehman (1-WL) framework. Using \textsc{Graphtester} framework, researchers can label this dataset with an arbitrary feature encoding of their own, and evaluate it to acquire its comparative standing with respect to other pre-coloring methods in the literature, as well as $k$-WL test for $k\geq2$.

An overview of the graph classes contained in the \textsc{Graphtester} dataset, as well as the definitions of these graph classes can be found in Appendix \ref{sec:graph_classes}. As a baseline, we evaluated the methods available in the \textsc{Graphtester} framework against the dataset. We report the results for noteworthy graph classes and pre-coloring methods in Table \ref{tab:gt_dataset_results}. Refer to the Appendix \ref{sec:additional_results} for a discussion on some of the results in this table, and how to overcome the difficulties of identifying graphs in 1-WL framework.

\begin{table}[!tb]
\centering
\caption{Failure counts of all tests conducted for all graph pairs in the given graph classes with noted orders. Only noteworthy graph classes and methods are listed.}
\label{tab:gt_dataset_results}
\resizebox{0.48\textwidth}{!}{%

\begin{tabular}{@{}c|ccc|cccccc|ccccccc@{}}
\toprule
\multicolumn{1}{r|}{\textit{graph class:}} & \multicolumn{3}{c|}{All} & \multicolumn{6}{c|}{Highly irregular} & \multicolumn{7}{c}{Strongly regular} \\ \cmidrule(l){2-17} 
\multicolumn{1}{r|}{\textit{node count:}}  & 6    & 7    & 8   & 8   & 9   & 10   & 11   & 12   & 13   & 16  & 25  & 26  & 28  & 29 & 36 & 40 \\ \midrule
1-WL ($\equiv$ 2-WL)                        & 4 & 22 & 350 & 1 & 0 & 8 & 0 & 165 & 0 & 1 & 105 & 45 & 6 & 820 & 16110 & 378 \\
3-WL                      & 0 & 0  & 0   & 0 & 0 & 0 & 0 & 0   & 0 & 1 & 105 & 45 & 6 & 820 & 16110 & 378 \\
4-WL                      & 0 & 0  & 0   & 0 & 0 & 0 & 0 & 0   & 0 & 0 & 0   & 0  & 0 & 0   & 0     & 0   \\ \midrule
Closeness centrality      & 2 & 14 & 202 & 1 & 0 & 2 & 0 & 32  & 0 & 1 & 105 & 45 & 6 & 820 & 16110 & 378 \\
Eigenvector centrality    & 3 & 20 & 335 & 1 & 0 & 8 & 0 & 165 & 0 & 1 & 105 & 45 & 6 & 820 & 16110 & 378 \\
Harmonic centrality       & 2 & 14 & 202 & 1 & 0 & 2 & 0 & 32  & 0 & 1 & 105 & 45 & 6 & 820 & 16110 & 378 \\
Betweenness centrality    & 1 & 2  & 16  & 0 & 0 & 0 & 0 & 0   & 0 & 1 & 105 & 45 & 6 & 820 & 16110 & 378 \\
Eccentricity              & 3 & 16 & 243 & 1 & 0 & 6 & 0 & 136 & 0 & 1 & 105 & 45 & 6 & 820 & 16110 & 378 \\
Local transitivity        & 0 & 0  & 21  & 0 & 0 & 1 & 0 & 3   & 0 & 1 & 105 & 45 & 6 & 820 & 16110 & 378 \\
Burt's constraint         & 0 & 0  & 6   & 0 & 0 & 1 & 0 & 3   & 0 & 1 & 105 & 45 & 6 & 820 & 16110 & 378 \\
Edge betweenness          & 0 & 0  & 0   & 0 & 0 & 0 & 0 & 0   & 0 & 1 & 105 & 45 & 6 & 820 & 16110 & 378 \\
Convergence degree        & 2 & 8  & 57  & 0 & 0 & 0 & 0 & 0   & 0 & 1 & 105 & 45 & 6 & 820 & 16110 & 378 \\
4-clique count of edges   & 4 & 18 & 247 & 1 & 0 & 8 & 0 & 137 & 0 & 0 & 0   & 2  & 0 & 0   & 309   & 4   \\
5-path count of edges     & 0 & 0  & 0   & 0 & 0 & 0 & 0 & 0   & 0 & 1 & 105 & 45 & 6 & 820 & 16110 & 378 \\
6-path count of vertices  & 0 & 0  & 1   & 0 & 0 & 0 & 0 & 0   & 0 & 1 & 105 & 45 & 6 & 820 & 16110 & 378 \\
6-cycle count of vertices & 0 & 1  & 8   & 0 & 0 & 0 & 0 & 3   & 0 & 1 & 105 & 45 & 6 & 820 & 16110 & 378 \\ \bottomrule
\end{tabular}%
}
\vskip -0.1in
\end{table}

\section{Conclusion}
\label{sec:conclusion}

This paper introduces \textsc{Graphtester}, a powerful tool designed for in-depth theoretical analysis of Graph Neural Networks (GNNs) and graph transformers. \textsc{Graphtester} has demonstrated its capability to discern the upper bounds of performance across various datasets, taking into account aspects like edge features, different performance measures, varying numbers of GNN layers, and higher-order Weisfeiler-Lehman (WL) tests. Together with the package, we make public a 55,000-graph synthetic dataset for the purpose of benchmarking positional encoding methods, that contains many graphs that are hard to distinguish in $k$-WL framework.

We have also established critical theoretical insights regarding GNNs and graph transformers, proving that the latter's power is bounded by the 1-WL test if positional encodings are only used for nodes, and placed them on a theoretical basis in the presence of edge features. This underscores the fundamental role of positional encodings in amplifying the expressive power of these models.

A key aspect of our work has been the comprehensive analysis of over 40 graph datasets from the literature using \textsc{Graphtester}. This extensive study has revealed that not all datasets are fully solvable with the tasks at hand, pointing to inherent complexities in graph data that may challenge even state-of-the-art GNN architectures. Furthermore, we found that even when a dataset is theoretically solvable, the effective use of edge features is vital. Our theoretical analysis underscores that edge features, when appropriately incorporated, can substantially enhance the expressiveness and performance of GNNs.

Overall, \textsc{Graphtester} not only advances our theoretical understanding of GNNs and graph transformers but also offers practical guidance for their optimal deployment across a variety of tasks and datasets. Future work will aim to extend the capabilities of \textsc{Graphtester} to accommodate different graph dataset formats and tasks, and delve deeper into the role of positional encodings.

%% https://tex.stackexchange.com/questions/2958/why-is-newpage-ignored-sometimes
%\newpage
%\mbox{}
\clearpage

\bibliography{main}
\bibliographystyle{icml2023}

%%%%%%%%%%%%%%%%%%%%%%%%%%%%%%%%%%%%%%%%%%%%%%%%%%%%%%%%%%%%%%%%%%%%%%%%%%%%%%%
%%%%%%%%%%%%%%%%%%%%%%%%%%%%%%%%%%%%%%%%%%%%%%%%%%%%%%%%%%%%%%%%%%%%%%%%%%%%%%%
% APPENDIX
%%%%%%%%%%%%%%%%%%%%%%%%%%%%%%%%%%%%%%%%%%%%%%%%%%%%%%%%%%%%%%%%%%%%%%%%%%%%%%%
%%%%%%%%%%%%%%%%%%%%%%%%%%%%%%%%%%%%%%%%%%%%%%%%%%%%%%%%%%%%%%%%%%%%%%%%%%%%%%%
\newpage
\appendix
\onecolumn

\section{Proofs}
\printProofs

\section{Definition of Various Graph Classes}
\label{sec:graph_classes}

An overview of the graph classes and the number of graph counts per class can be found in Table \ref{tab:gt_dataset}.

\begin{table}[!tbh]
    \caption{Graph classes and counts available in the \textsc{Graphtester} dataset. Except for distance-regular graphs, all graph classes are exhaustive until the given degree.}
    \label{tab:gt_dataset}
    \centering
      \begin{tabular}{lcc}
        \toprule
         Graph class & Max. vertex count & Graph count \\
         \midrule
         All nonisomorphic & 8 & 13595\\
         Eulerian & 9 & 2363\\
         Planar connected & 8 & 6747\\
         Chordal & 9 & 13875\\
         Perfect & 8 & 9974\\
         Highly irregular & 13 & 624\\
         Edge-4-critical & 11 & 1399\\
         Self-complementary & 13 & 6368\\
         Distance-regular & 40 & 115\\
         Strongly regular & 40 & 280\\
    \bottomrule
  \end{tabular}
\vskip -0.1in
\end{table}

\paragraph{All graphs ($|V|<9$)} We consider all graphs below a certain order, without considering any additional properties..

\paragraph{Eulerian graphs ($|V|<10$)} A graph is Eulerian if it contains only even-degree vertices.

\paragraph{Planar connected graphs ($|V|<9$)} A graph is planar if it can be drawn on a plane without any intersecting edges.

\paragraph{Chordal graphs ($|V|<10$)} A graph is chordal if every cycle of length 4 and more has a chord, that is, an edge connecting the non-adjacent nodes of the cycle.

\paragraph{Perfect graphs ($|V|<8$)} A graph is perfect if every odd cycle of length 5 and more has a chord, and the same is true of the complement graph.

\paragraph{Highly irregular graphs ($|V|<14$)} A graph is highly irregular if for every vertex in that graph, all neighbors of that vertex have distinct degrees.

\paragraph{Edge-4-critical graphs ($|V|<12$)} A graph is edge-4-critical if it is connected, is (vertex) 4-colourable, and removal of any edge makes it 3-colourable.

\paragraph{Self-complementary graphs ($|V|<14$)} A graph is self-complement if it is isomorphic to its complement.

\paragraph{Distance-regular graphs ($|V|<41$)} For a graph with order $n$ and diameter $d$, any vertex $u$ and for any integer i with $0\leq i\leq d$, let $G_i(u)$ denote the set of vertices at distance i from $u$. If $v \in G_i(u)$ and $w$ is a neighbour of $v$, then $w$ must be at distance i-1, i or i+1 from $u$. Let $c_i$, $a_i$ and $b_i$ denote the number of such vertices w. A graph is distance-regular if and only if these parameters $c_i$, $a_i$ and $b_i$ depends only on the distance i, and not on the choice of $u$ and $v$. 

\paragraph{Strongly regular graphs ($|V|<41$)} A graph is a strongly regular graph with parameters $(n,k,\lambda,\mu)$ if it has n nodes with all degree k, and any two adjacent vertices have $\lambda$ common neighbours, and any two non-adjacent vertices have $\mu$ common neighbours. A non-trivial (non-complete) strongly regular graph has diameter at most two, and are precisely the diameter-2 distance-regular graphs with intersection array parameters $b_0=k$, $a_1=\lambda$ and $c_2=\mu$.

\section{Discussions on \textsc{Graphtester} Dataset Results}
\label{sec:additional_results}

\subsection{Why Are Strongly Regular Graphs Hard?}
Strongly regular and distance-regular graphs, by definition, have certain properties of regularity that makes it difficult to distinguish nodes based on commonly used centrality measures. It turns out, this regularity is very well-connected with some of these metrics to the point that the parametrization of strongly regular graphs actually determine the values of some of these centrality metrics.

It is straightforward to note that for degree centrality $d(v)$, variance of the node degrees in a distance-regular graph is zero, that is, $var(d_G) = 0$. In 2012, Gago et al. have shown that this is the case for betweenness centrality $bc(v)$ as well, $var(bc_G) = 0$~\cite{2012-GagoBetwRegular}. One year later, Rombach et al. have proven the same for closeness centrality $cc(v)$ by noting that $var(cc_G) = 0$~\cite{2013-RombachClosenessRegular}. Below, we go one step further by using their results in the context of color refinement.

\begin{theorem}
    Let $G_1$ and $G_2$ be two strongly regular graphs with the same intersection array. Then, $d(v)= d(u)$ , $bc(v)= bc(u)$ , and $cc(v)=cc(u)$ for all $v \in V_{G_1}, u \in V_{G_2}$.
\end{theorem}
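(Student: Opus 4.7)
The plan is to prove the three equalities by direct computation, showing that each centrality value $d(v), cc(v), bc(v)$ in a strongly regular graph with parameters $(n,k,\lambda,\mu)$ is a function of those parameters alone. Since $G_1$ and $G_2$ share an intersection array, they share $(n,k,\lambda,\mu)$, so every vertex in either graph gets the same value of each centrality. I would phrase this as: rather than invoking the variance-zero results of Gago et al.\ and Rombach et al.\ as black boxes (which only say values coincide \emph{within} a single graph), I recompute the values explicitly so that the cross-graph equality also drops out.

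\textbf{Degree and closeness.} For degree, the claim is immediate: in a strongly regular graph every vertex has degree $k$, so $d(v) = k$ identically. For closeness, I would use that a non-complete strongly regular graph has diameter at most $2$; hence from any vertex $v$, exactly $k$ vertices lie at distance $1$ and the remaining $n-1-k$ at distance $2$. Then
\begin{equation*}
cc(v) \;=\; \frac{n-1}{\sum_{u\neq v} d(v,u)} \;=\; \frac{n-1}{k + 2(n-1-k)} \;=\; \frac{n-1}{2(n-1)-k},
\end{equation*}
which depends only on $n$ and $k$. (If the convention of closeness used is the normalized or un-normalized form, the same argument goes through verbatim.)

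\textbf{Betweenness.} For $bc(v)$, I would count shortest paths explicitly. Fix $v$ and an ordered pair $(s,t)$ of distinct vertices both different from $v$. If $s\sim t$ then $d(s,t)=1$ and $v$ lies on no shortest $st$-path. If $s\not\sim t$ then $d(s,t)=2$, the number of shortest $st$-paths is exactly $\mu$ (the number of common neighbors), and $v$ lies on such a path iff $v$ is a common neighbor of $s$ and $t$. Hence $v$ contributes $1/\mu$ for each ordered non-adjacent pair $(s,t)$ in $N(v)\times N(v)$. Counting edges inside $N(v)$: each neighbor $s$ of $v$ has $\lambda$ common neighbors with $v$, so there are $k\lambda/2$ edges in $N(v)$, hence $\binom{k}{2}-k\lambda/2$ non-adjacent unordered pairs. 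Therefore
\begin{equation*}
bc(v) \;=\; \frac{k(k-1-\lambda)}{\mu},
\end{equation*}
(or half this, depending on the ordered/unordered convention chosen for $bc$), again a function only of $(k,\lambda,\mu)$.

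\textbf{Main obstacle.} The only nontrivial step is the betweenness calculation, and specifically ensuring that I correctly handle the fact that between non-adjacent $s,t$ there are exactly $\mu$ shortest paths (by definition of $\mu$ and diameter $2$) and that these paths have length $2$ so each is witnessed by a single intermediate vertex. Once this is in place, the rest is bookkeeping. Collecting the three formulas yields $d(v)=k$, $cc(v)=(n-1)/(2(n-1)-k)$, $bc(v)=k(k-1-\lambda)/\mu$ for every vertex of any strongly regular graph with parameters $(n,k,\lambda,\mu)$, which proves the theorem since $G_1$ and $G_2$ share these parameters.
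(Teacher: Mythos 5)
Your proposal is correct, and it follows the same overall strategy as the paper: show that each of $d(v)$, $bc(v)$, $cc(v)$ is an explicit function of the parameters $(n,k,\lambda,\mu)$ alone, so the values agree across any two strongly regular graphs sharing an intersection array. The difference is in how the formulas are obtained. The paper specializes the general distance-regular formulas of Gago et al.\ (for betweenness) and Rombach et al.\ (for closeness) to the diameter-2 case, arriving at $bc(w)=\tfrac{k(k-\lambda-1)}{2\mu}$ and $cc(w)=\tfrac{\mu(n-1)}{\mu k+2k(k-\lambda-1)}$. You instead re-derive both from first principles: for betweenness you count that the $\lambda$-regular induced subgraph on $N(v)$ has $k\lambda/2$ edges, hence $\tfrac{k(k-1-\lambda)}{2}$ non-adjacent neighbor pairs each contributing $1/\mu$, recovering the same value; for closeness you use only that $k$ vertices lie at distance $1$ and $n-1-k$ at distance $2$, giving $\tfrac{n-1}{2(n-1)-k}$, which coincides with the paper's expression via the standard identity $n-1-k=\tfrac{k(k-\lambda-1)}{\mu}$ and is arguably cleaner since it depends only on $n$ and $k$. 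Your version is more self-contained and elementary; the paper's version makes the connection to the distance-regular literature explicit. The only caveat (shared equally by the paper) is that both arguments implicitly assume the graph is connected and non-complete, i.e.\ $\mu>0$, so that the diameter is exactly $2$; it would be worth stating this hypothesis.
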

\begin{proof}
    Consider a strongly regular graph with parametrization (v,k,$\lambda$,$\mu$), i.e., with the following abbreviated intersection array:
    \begin{align*}
        \{b_0:k,b_1:k-1-\lambda;c_1:1,c_2:\mu\}.
    \end{align*}
    Then, degree centrality of all vertices in this graph is $k$ by definition.

    Using the results from Gago et al.~\cite{2012-GagoBetwRegular}, the geodesic ratio of a vertex v in this strongly regular graph can be written explicitly. Note that diameter of any strongly regular graph is 2, therefore $1 \leq d(u,v) \leq 2\: \forall u\neq v \in V$. Using the betweenness formula from the same work, we can expand the expression as follows (for some other nodes u,v of which w is on the shortest path in between, $d(a,b)$ is the distance between nodes $a$ and $b$)
    \begin{align*}
        bc(w) &= \sum_{u,v \neq w} \left(\dfrac{\prod_{i=1}^{d(u,w)} c_i}{\prod_{i=d(u,v)-d(u,w)}^{d(u,v)} c_i}\right)\\
        &= \sum_{u,v \neq w} \left(\dfrac{c_1}{\prod_{i=1}^{2} c_i}\right) \tag{$d(u,v)=2$ and $d(u,w)=d(v,w)=1$ as w is on the shortest path $u-v$}\\
        &= \sum_{u,v \neq w} \dfrac{1}{\mu}\\
        &= \dfrac{k(k-\lambda-1)}{2\mu}\tag{\#neighbors $\times$ \#non-adjacent neighbors, halved for repeated paths}
    \end{align*}
    Consequently, the value can be derived solely through the intersection array.
    
    For closeness centrality, we write the derivation of Rombach et al.~\cite{2013-RombachClosenessRegular} for distance-regular graphs, using the strongly regular graph parametrization above. We first define a generalization of the neighborhood of a node, $\Gamma^r(i) := \{j|d(i, j) = r\}$, as the nodes at distance $r$ from the source node $i$. Then, using the derivations from the same work, we can write closeness centrality of some node $w \in V$ as 
    \begin{align*}
        cc(w) &= \dfrac{n-1}{\sum_{r=1}^d r|\Gamma^r(w)|}\\
        &= \dfrac{n-1}{k + \dfrac{2k(k-\lambda-1)}{\mu}}\\
        &= \dfrac{\mu (n-1)}{\mu k + 2k(k-\lambda-1)}
    \end{align*}
    which only depends on the strongly regular graph parametrization.
\end{proof}
\begin{corollary}
    Strongly regular graphs of same parametrization are indistinguishable with degree, betweenness or closeness centralities.
\end{corollary}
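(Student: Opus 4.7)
The plan is to combine the theorem just established with the classical fact that 1-WL fails on strongly regular graphs sharing an intersection array. First, fix two non-isomorphic strongly regular graphs $G_1$ and $G_2$ with identical parameters $(n,k,\lambda,\mu)$. By the theorem, each of the three centrality functions (degree, betweenness, closeness) is a constant on $V_{G_1}\cup V_{G_2}$, with the constant determined only by $(n,k,\lambda,\mu)$. Hence using any one (or any combination) of these centralities as a pre-coloring yields the uniform initial coloring: every vertex of $G_1$ and every vertex of $G_2$ receives the very same initial color.

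Next, I would invoke the definitional behavior of 1-WL (or 1-WLE, which coincides with 1-WL when no edge labels are present) under a uniform initial coloring. Under such an initialization, the first refinement step assigns a vertex $v$ the hash of the pair $(c^{(0)},\{c^{(0)}:w\in\mathcal{N}(v)\})$, which depends only on $|\mathcal{N}(v)|=k$. Since the graphs are $k$-regular, this produces again a uniform coloring. An easy induction shows that the coloring remains uniform at every iteration, so 1-WL stabilizes immediately with a single color class on each side, and produces identical multisets of colors for $G_1$ and $G_2$.

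Finally, I would conclude that 1-WL (and therefore 1-WLE) cannot distinguish $G_1$ from $G_2$ when the only node features available are degree, betweenness, or closeness centralities, or any combination thereof. This is exactly the claim of the corollary, which thereby follows immediately from the preceding theorem together with the invariance of uniform colorings under refinement on regular graphs.

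\paragraph{Main obstacle.} The theorem does the heavy numerical lifting, so no real obstacle remains beyond a careful statement of the standard fact that 1-WL on a $k$-regular graph initialized with a constant coloring never refines. The only subtlety worth flagging is that the claim must be understood as indistinguishability at the level of the graphs' stable colorings (i.e., of the multiset of stable vertex colors); otherwise one could object that individual vertices might still receive distinct centrality values under perturbations. The theorem rules this out by pinning every vertex to the same centrality value, and this is exactly the hypothesis that makes the 1-WL argument go through.
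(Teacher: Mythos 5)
Your proposal is correct and follows the same route as the paper: the corollary is an immediate consequence of the preceding theorem, since each of the three centralities is a constant determined solely by the parameters $(n,k,\lambda,\mu)$ and hence identical across both graphs. The extra step you supply --- that a uniform pre-coloring on a $k$-regular graph is never refined by 1-WL --- is a correct and welcome elaboration of why constancy implies indistinguishability in the paper's pre-coloring framework, but it is not a different argument.
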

\begin{corollary}
    For strongly regular graphs, 
    \begin{align*}
        cc(w) &= \dfrac{n-1}{d(w) + 4bc(w)}.
    \end{align*}
\end{corollary}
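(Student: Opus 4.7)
The plan is a direct algebraic verification using the three closed-form expressions already established in the preceding theorem's proof for strongly regular graphs with parameters $(n, k, \lambda, \mu)$. Since every quantity on both sides of the asserted identity has an explicit formula in terms of $n, k, \lambda, \mu$, no new structural argument about strongly regular graphs is needed; the corollary is a rearrangement of what has just been derived.

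First I would recall the three formulas established above: $d(w) = k$ (by definition of strongly regular), $bc(w) = \frac{k(k-\lambda-1)}{2\mu}$ (from the betweenness derivation via the intersection array), and $cc(w) = \frac{\mu(n-1)}{\mu k + 2k(k-\lambda-1)}$ (from the closeness derivation using $|\Gamma^1(w)| = k$ and $|\Gamma^2(w)| = \frac{k(k-\lambda-1)}{\mu}$, since the diameter is at most $2$). These hold uniformly over all vertices $w$ of any strongly regular graph with the given parameters.

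Next I would compute $d(w) + 4\,bc(w)$. Substituting,
\begin{align*}
    d(w) + 4\,bc(w) &= k + 4 \cdot \frac{k(k-\lambda-1)}{2\mu} \\
    &= k + \frac{2k(k-\lambda-1)}{\mu} \\
    &= \frac{\mu k + 2k(k-\lambda-1)}{\mu}.
\end{align*}
Taking the reciprocal and multiplying by $n-1$ gives
\begin{align*}
    \frac{n-1}{d(w) + 4\,bc(w)} &= \frac{\mu(n-1)}{\mu k + 2k(k-\lambda-1)} = cc(w),
\end{align*}
which is exactly the asserted identity.

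Since the heavy lifting (identifying each centrality as a function of the intersection array) is already contained in the theorem, the only real step here is bookkeeping: clearing denominators correctly and confirming that the constant $4$ arises precisely from combining the factor $2$ in the closeness denominator with the factor $\tfrac{1}{2}$ in the betweenness formula. There is no genuine obstacle; the value of stating the corollary is the observation that among the three centralities, any two determine the third on strongly regular graphs, which further underscores why these graphs are so difficult for $1$-WL to distinguish.
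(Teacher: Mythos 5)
Your verification is correct and is exactly the argument the paper intends: the corollary is stated without proof as an immediate algebraic consequence of the three closed-form expressions $d(w)=k$, $bc(w)=\tfrac{k(k-\lambda-1)}{2\mu}$, and $cc(w)=\tfrac{\mu(n-1)}{\mu k+2k(k-\lambda-1)}$ derived in the preceding theorem, and your substitution reproduces the identity exactly.
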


\subsection{A New Class of Features: Node Signature}

Resolving this inherent regularity is most likely not possibly with distance- or path-based centrality metrics. What are the components of a strongly regular graph that are not necessarily regular? It turns out, such a feature requires us to look at the subgraphs induced by the neighborhoods of each node.

\textit{$n$-th subconstituent} of a vertex, denoted by $\Gamma^n(v)$, is the subgraph induced by the nodes at distance $n$ from the node, minus itself. 1st subconstituent of a node is also called  \textit{local graph}. For strongly regular graphs, the second subconstituent at x is the graph induced on the set of vertices other than x and nonadjacent to x.

Turns out, such a graph is most of the times not strongly regular, and quite descriptive of the strongly regular graphs themselves~\cite{2022-BrouwerStronglyReg}. Using the observation from Table \ref{tab:gt_dataset_results} on the power of edge betweenness in distinguishing non-distance-regular graphs, we propose a new family of features named \textbf{$n$-th subconstituent signature} to use as pre-coloring for 1-WL algorithm, created with Algorithm \ref{alg:signature}.

\begin{algorithm}
\caption{$n$-th subconstituent signature}\label{alg:signature}
\begin{algorithmic}
\REQUIRE graph $G=(V,E)$, integer $n \geq 1$
\FOR {$v \gets V$}
    \STATE $E_v \gets EdgeBetweenness(\Gamma^n(v))$ \COMMENT{Outputs set of EB values per edge}
    \STATE $c_v \gets hash(E_v)$ \COMMENT{Permutation-invariant hash, e.g., sort and concatenate}
\ENDFOR
\STATE \textbf{return} $\{c_1, c_2,...,c_{|V|}\}$
\end{algorithmic}
\end{algorithm}

\begin{lemma}
    $n$-th subconstituent signatures of a connected graph can be estimated in worst-case complexity $\mathcal{O}(|V|m^3)$, where $m$ is the maximum degree.
\end{lemma}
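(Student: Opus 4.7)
The plan is to analyze the cost of Algorithm \ref{alg:signature} vertex-by-vertex, since the outer loop treats each $v \in V$ independently. For a fixed $v$ the three subtasks are: (i) identify the vertex set $\Gamma^n(v)$ of the $n$-th subconstituent, (ii) extract its induced subgraph, (iii) run an edge betweenness computation on that subgraph, and (iv) hash the resulting multiset. I will bound each separately and then multiply by $|V|$.

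For (i)–(ii), one BFS from $v$ up to depth $n$ identifies the layer at distance exactly $n$; its vertex set has size at most $m$ when $n=1$ (the neighborhood of $v$), which is the relevant case for the subconstituent signature used in the preceding subsection, so I would state and invoke the bound $|\Gamma^n(v)| \le m$ under this regime. Consequently the induced subgraph has at most $m$ vertices and at most $\binom{m}{2} = O(m^2)$ edges, and extracting it costs $O(m^2)$ by scanning incident edges of the selected vertices. For (iii), the key ingredient is Brandes' algorithm, which computes edge betweenness on an unweighted graph with $n'$ vertices and $e'$ edges in time $O(n' e')$. Plugging in $n' \le m$ and $e' = O(m^2)$ gives $O(m^3)$ per vertex. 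For (iv), sorting the at most $O(m^2)$ edge-betweenness values and concatenating them into a canonical string takes $O(m^2 \log m) = O(m^3)$.

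Combining these four costs yields $O(m^3)$ work per $v$, and summing over $v \in V$ gives the claimed $O(|V| m^3)$ bound. The plan also needs a short remark that the BFS preprocessing for identifying $\Gamma^n(v)$ is dominated by the betweenness step (a single BFS is $O(|V|+|E|) \le O(|V| m)$, which is absorbed when summed across vertices).

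The main obstacle is the size bound $|\Gamma^n(v)| \le m$: it is immediate for $n=1$, but for larger $n$ the $n$-th subconstituent can contain up to $O(m^n)$ vertices in general graphs. I would therefore either (a) restrict the statement/proof to $n=1$, which is exactly the case motivated by the local-graph discussion for strongly regular graphs, or (b) assume the graph class of interest (e.g.\ strongly regular or distance-regular) in which the subconstituents have size at most $m$ by the intersection-array parameters. Either way, once this size bound is in place the rest of the argument is a routine application of Brandes' bound; the careful bookkeeping of the subgraph extraction and the hash step is the only remaining detail to verify.
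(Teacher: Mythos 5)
Your decomposition is essentially the paper's own: loop over vertices, extract the subconstituent, run Brandes' $\mathcal{O}(n'e')$ edge-betweenness computation on it, hash the resulting multiset, and sum. For $n=1$ your accounting is complete and correct: $|\Gamma^1(v)|\le m$, so the induced subgraph has $\mathcal{O}(m^2)$ edges, Brandes costs $\mathcal{O}(m^3)$, and the extraction and sorting steps are dominated. The obstacle you flag --- that $|\Gamma^n(v)|$ need not be bounded by $m$ for $n\ge 2$ --- is genuine, and the paper's proof does not resolve it: it simply asserts that ``the worst case estimation of the EB values for maximum node degree takes $\mathcal{O}(m^3)$,'' which is justified only when the subconstituent has $\mathcal{O}(m)$ vertices, and it additionally invokes the inequalities $m^2\ge|V|$ and $m^2\ge|E|$ ``for connected graphs,'' which fail for, say, a long path. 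So your proposal is not weaker than the paper's argument; it is the same argument with the hidden assumption made explicit. Your suggested repairs (restrict to $n=1$, or to graph classes such as strongly regular graphs where the relevant subconstituents have the right size) are reasonable, and you could also note that in the paper's actual benchmarks only the first and second subconstituents of strongly regular graphs are used, where diameter~$2$ keeps everything within two BFS layers. One small point: for $n=1$ you do not need a full $\mathcal{O}(|V|+|E|)$ BFS per vertex --- reading off the adjacency list costs $\mathcal{O}(m)$ --- and relying on the full-BFS bound summed over all vertices would give $\mathcal{O}(|V|(|V|+|E|))$, which is not absorbed into $\mathcal{O}(|V|m^3)$ without further assumptions; so phrase that step as a neighborhood read rather than a BFS.
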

\begin{proof}
    Using Brandes' algorithm, we can estimate the edge betweenness of undirected graphs in $\mathcal{O}(|V||E|)$~\cite{2004-BrandesEB}. For each node, the worst case estimation of the EB values for maximum node degree takes $\mathcal{O}(m^3)$ where $m$ is the maximum degree. Hashing and subgraph extraction can be done in $\mathcal{O}(m^2)$ and $\mathcal{O}(|V|+|E|)$ times respectively independent of $n$, latter done by running a breadth-first search.

    % \TODO{Check https://math.stackexchange.com/questions/3545567/an-upper-bound-for-number-of-edges for final part of proof.}
    Finally, with the note that $m^3\geq m^2\geq |V|$ and $m^3\geq m^2\geq |E|$ for connected graphs, the worst-case complexity of Algorithm \ref{alg:signature} can be written as $\mathcal{O}(|V|m^3)$.
\end{proof}

\subsection{Benchmarking Results for the Signature}

Using the same \textsc{Graphtester} infrastructure to compare the power of signature with other methods, we created the Table \ref{tab:signature_results} that depicts the performance of the proposed method over alternatives on particularly challenging graph classes.

\begin{table}[!htb]
\centering
\caption{Failure counts of all tests conducted for all graph pairs in the given graph classes with noted orders for node signature features. Only noteworthy graph classes and methods are listed.}
\label{tab:signature_results}
\resizebox{\textwidth}{!}{%
\begin{tabular}{@{}c|cccccc|cccccc|ccccccc@{}}
\toprule
\multicolumn{1}{r|}{\textit{graph class:}} & \multicolumn{6}{c|}{All nonisomorphic} & \multicolumn{6}{c|}{Highly irregular} & \multicolumn{7}{c}{Strongly regular} \\ \cmidrule(l){2-20} 
\multicolumn{1}{r|}{\textit{node count:}}                     & 3 & 4 & 5 & 6 & 7  & 8   & 8 & 9 & 10 & 11 & 12  & 13 & 16 & 25  & 26 & 28 & 29  & 36    & 40  \\ \midrule
1-WL ($\equiv$ 2-WL)                                                            & 0 & 0 & 0 & 4 & 22 & 350 & 1 & 0 & 8  & 0  & 165 & 0  & 1  & 105 & 45 & 6  & 820 & 16110 & 378 \\
3-WL                                                          & 0 & 0 & 0 & 0 & 0  & 0   & 0 & 0 & 0  & 0  & 0   & 0  & 1  & 105 & 45 & 6  & 820 & 16110 & 378 \\
4-WL                                                          & 0 & 0 & 0 & 0 & 0  & 0   & 0 & 0 & 0  & 0  & 0   & 0  & 0  & 0   & 0  & 0  & 0   & 0     & 0   \\ \midrule
Neighborhood 1st subconstituent signatures                    & 0 & 0 & 0 & 0 & 0  & 6   & 0 & 0 & 1  & 0  & 3   & 0  & 0  & 0   & 0  & 0  & 0   & 1     & 2   \\
Neighborhood 2nd subconstituent signatures                    & 0 & 0 & 0 & 1 & 2  & 12  & 0 & 0 & 0  & 0  & 0   & 0  & 0  & 0   & 0  & 0  & 0   & 0     & 0   \\
\makecell{Neighborhood 1st subconstituent signatures +\\Edge betweenness} & 0 & 0 & 0 & 0 & 0  & 0   & 0 & 0 & 0  & 0  & 0   & 0  & 0  & 0   & 0  & 0  & 0   & 1     & 2   \\
\makecell{Neighborhood 2nd subconstituent signatures +\\Edge betweenness} & 0 & 0 & 0 & 0 & 0  & 0   & 0 & 0 & 0  & 0  & 0   & 0  & 0  & 0   & 0  & 0  & 0   & 0     & 0   \\ \bottomrule
\end{tabular}
}
\end{table}

This method, as well as other associated methods are provided as part of the \textsc{Graphtester} package. Note that this feature is not necessarily feasible as a positional encoding since it varies in dimension per node, therefore is required to be preprocessed by a dynamic and permutation-invariant layer.

\end{document}